%% file: main.tex
\documentclass{article}

\input{definitions}

\usepackage{hyperref}

\usepackage[preprint]{icml2026}

\usepackage{mathtools}

\usepackage[capitalize,noabbrev]{cleveref}

\theoremstyle{plain}
\newtheorem{theorem}{Theorem}[section]
\newtheorem{proposition}[theorem]{Proposition}
\newtheorem{lemma}[theorem]{Lemma}

\theoremstyle{definition}

\newtheorem{assumption}[theorem]{Assumption}
\theoremstyle{remark}

\usepackage[textsize=tiny]{todonotes}

\icmltitlerunning{Unbiased Approximate Vector-Jacobian Products for Efficient Backpropagation}

\begin{document}

\twocolumn[
  \icmltitle{Unbiased Approximate Vector-Jacobian Products for Efficient Backpropagation}



  \icmlsetsymbol{equal}{*}

  \begin{icmlauthorlist}
    \icmlauthor{Killian Bakong}{inria}
    \icmlauthor{Laurent Massoulié}{inria}
    \icmlauthor{Edouard Oyallon}{edouard}
    \icmlauthor{Kevin Scaman}{inria}
  \end{icmlauthorlist}

  \icmlaffiliation{inria}{Inria Paris \& DI ENS, ENS, PSL University}
  \icmlaffiliation{edouard}{Sorbonne University, CNRS, ISIR, Paris}

  \icmlcorrespondingauthor{Killian Bakong}{firstname.lastname@inria.fr}

  \icmlkeywords{Machine Learning, ICML}

  \vskip 0.3in
]



\printAffiliationsAndNotice{}  

\begin{abstract}
  In this work we introduce methods to reduce the computational and memory costs of training deep neural networks. Our approach consists in replacing exact vector-jacobian products by randomized, unbiased approximations thereof during backpropagation. We provide a theoretical analysis of the trade-off between the number of epochs needed to achieve a target precision and the cost reduction for each epoch. We then identify specific unbiased estimates of vector-jacobian products for which we establish desirable optimality properties of minimal variance under sparsity constraints. Finally we provide in-depth experiments on multi-layer perceptrons, BagNets and Visual Transfomers architectures. These validate our theoretical results, and confirm the potential of our proposed unbiased randomized backpropagation approach for reducing the cost of deep learning.
\end{abstract}

\section{Introduction} \label{sec:intro}

Backpropagation is the workhorse of neural network training: it applies the chain rule over a computational graph to compute gradients efficiently \citep{rumelhart1986learning,lecun2002efficient,baydin2018automatic}.
Formally, it is reverse-mode automatic differentiation (AD) applied to a directed acyclic graph (DAG) of primitive operations \citep{griewank2008evaluating}.
For scalar losses, reverse-mode AD composes a sequence of vector-Jacobian products (VJPs) and is operation- and memory-optimal  \citep{linnainmaa1976taylor,baur1983complexity}.
However, despite this optimality, backpropagation remains a major speed bottleneck in modern training, which motivates numerous engineering techniques (e.g., activation checkpointing \citep{herrmann2019optimal,griewank2000algorithm,rojas2020study} and 4D parallelism \citep{li2021sequence}) to scale up those methods to more workers or larger models.

This paper studies \emph{randomized approximations of VJPs} inside reverse-mode AD.
The motivation of our method is threefold and aligns with the constraints of large-scale training, especially under pipeline parallelism \citep{huang2019gpipe} and its hybrid variants \citep{narayanan2019pipedream}.
\textbf{(i)} Communication: in pipeline parallelism, inter-layer activations often dominate cross-device traffic \citep{narayanan2019pipedream,huang2019gpipe}.
Compressing these signals while preserving gradient unbiasedness can substantially reduce bandwidth and latency.
\textbf{(ii)} Computation and memory: some randomized linear mapping (like random features sampling~\citep{rahimi2007random}) can accelerate intermediate computations (e.g., approximate VJPs \citep{coleman1983estimation}) and lower memory footprint, enabling better pipeline balance and reducing stalls by trading controlled variance for compute.
\textbf{(iii)} Deep learning specific: our framework subsumes standard gradient-compression techniques based on masking \citep{sun2017meprop,stich2018sparsified} as a special case, but operates at the level of local VJPs within arbitrary architectures and allows us to apply the approximation in a cascade of layers of a deep neural network.

Concretely, at a node $y=f(x)$ in the computational DAG with Jacobian $J_f(x)$, during backpropagation we replace the exact VJP $J_f(x)^\top g(x)$ by $\widehat J_f(x)^\top g(x)$ where $\widehat J_f(x)$ is a \emph{randomized} operator such that $\mathbb{E}[\widehat J_f(x)|g(x)]=J_f(x)$.
We analyze how the variance due to such local substitutions propagates along the reverse pass and identify randomized operators which yield favorable  
accuracy--efficiency trade-offs.
Specifically we propose to approximate operators $J_f$ by random rescaled low-rank  projections --either along data-dependent directions or along  fixed directions-- the latter leading to  diagonal mask-and-rescale schemes that keep a few coordinates at random.
Both preserve unbiasedness and let us target high-impact directions while meeting a budget on sketch size.

\paragraph{Contributions.}
\textbf{(1)} We propose replacing exact VJPs in a DAG by unbiased randomized estimators at selected nodes to reduce the cost of backpropagation at the expense of limited additional (stochastic gradient) variance.  
\textbf{(2)} We characterize the additional variance caused by such randomized VJPs along the backpropagation and identify regimes where errors dampen or amplify as a function of local Jacobian spectra. 
\textbf{(3)} We identify minimal variance unbiased VJPs for (i) general rank constraints and (ii) rank constraints for diagonal mask-and-rescale.
\textbf{(4)} We compare the training accuracy versus cost reduction trade-offs obtained for a selection of candidate unbiased randomized VJPs via experiments\footnote{Our implementations and experimental framework will be made publicly available upon publication.} with 
BagNets \citep{brendel2019approximating}, Visual Transformers \citep{dosovitskiy2021imageworth16x16words} and standard MLPs architectures. This leads us to identify promising candidates, in particular the ``$\ell_1$''-score approach which combines high cost reduction with marginal impact on test accuracy for fixed number of training steps.


A discussion of related works is given after the Conclusion.

\section{Sketching Reverse-Mode Automatic Differentiation}\label{sec:sketching}

\subsection{Variance in Stochastic Gradient Descent}
We consider the minimization of an objective function:
\begin{equation}
    \label{eq:minimization_problem}
    \min F(\theta) = \mathbb{E}_{\xi}[f(\theta, \xi)],
\end{equation}
where $\xi$ denotes a random data sample, and $\theta \in \mathbb{R}^d$ a set of parameters, optimization being done by Stochastic Gradient Descent (SGD) with minibatches \cite{bottou1991stochastic}.
At each iteration $t$, a minibatch of size $B$ is sampled and the update then writes:
\begin{equation}
    \label{eq:minibatch_sgd}
    \theta_{t+1} = \theta_t - \eta g_t, \quad g_t := \frac{1}{B} \sum_{b\in[B]} \nabla_\theta f(\theta_t, \xi_b).
\end{equation}

Under classical assumptions, e.g. $F$ $\beta$-smooth, non-negative, and variance bounded by $\mathbb{E}[\| g_t - \nabla F(\theta_t) \|^2 | \theta_t] \le \sigma^2$, canonical results show that SGD converges at a rate proportional to said variance.
In particular, with appropriate step-size $\eta$, one obtains a bound on the sizes of gradients after $T$ iterations
\begin{equation}
    \label{eq:sgd_rate}
    \min_{0 \le t \le T-1} \mathbb{E}[\|\nabla F_t\|^2] \le \frac{2(F_0 - F^*)}{\eta T} + \beta \eta \sigma^2,
\end{equation}
where $F^\star$ denotes the global minimum value of $F$ on $\mathbb{R}^d$.

Crucially, nothing in the proof depends on the specific scheme beyond unbiasedness and bounded variance. The bound is conservative, but it mirrors training behavior.

\subsection{Stochastic Gradient Surrogate}
\label{subsec:noisy_grads}
Suppose that now, instead of $g_t$, we use a modified gradient surrogate,
\begin{equation}
    \mathbb{E}[ \widehat{g}_t | g_t] = g_t ,
\end{equation}
with $\widehat{g}_t$ satisfying unbiasedness and bounded variance, \ie $\mathbb{E}[\|  \widehat{g}_t-g_t \|^2 |  g_t] \le V$. Then $\widehat{g}_t$ remains an unbiased estimator of $\nabla F(\theta_t)$, and standard SGD analyses apply unchanged.
In particular, the convergence bound in \eqref{eq:sgd_rate} still holds, up to replacing the variance $\sigma^2$ by $\sigma^2 + V$.
With this additional term, for a target accuracy $\varepsilon > 0$ on the right-hand side of \eqref{eq:sgd_rate} obtained under the optimal choice of step-size $\eta$, the required number of iterations scales as
\begin{equation}
\label{eq:num_iter}
    T = \mathcal{O}\left( \frac{(\sigma^2 + V) \beta (F_0 - F^\star)}{\varepsilon^2} \right).
\end{equation}
As a consequence, if we denote $\rho(V)$ the per-iteration computation induced by the injected variance $V$, a net computational gain (\ie wall-clock time) is achieved if
\begin{equation}
\label{eq:clock_gain}
    \rho(V)(\sigma^2 + V) \le \rho(0)\sigma^2.
\end{equation}
This condition reflects the variance-efficiency trade-off.
Thus, any additional noise that is conditionally unbiased only affects optimization through its contribution to variance.
This motivates our approach: we trade exact gradient computations for computationally cheap approximations, at the cost of introducing additional unbiased noise.

\subsection{Gradient Computation on a DAG}
We now describe how such additional noise arises when approximating backpropagation.

\paragraph{Standard Gradient Computations through a Computational Graph.}
Consider a computational DAG with leaf inputs and intermediate variables connected by differentiable operations. For each node \(i\), the forward computation for a sample $\xi$ is
\begin{equation}
    x_j(\xi) \;=\; f_i\!\bigl(\theta_i,\,(x_i(\xi))_{i \to j}\bigr),
\end{equation}
where \(i \to j\) indicates that \(x_i\) is an input to the operation producing \(x_j\), and \(\theta_i\) are parameters local to node \(i\).
Given a minibatch $\{\xi_1, \dots, \xi_B\}$, forward pass is applied independently to all $\xi_b$.
The empirical loss (scalar objective to minimize) is:
\begin{equation}
    L = \frac{1}{B} \sum_{b=1}^B \ell(x_{\text{out}}^{(b)}),
\end{equation}
where $x_{\text{out}}$ is the final node.
Define reverse-mode sensitivities (adjoints) \(g_i^{(b)} \triangleq \frac{\partial}{\partial x_i}
 \ell(x_{\text{out}}^{(b)})\) 
 and \(h_i^{(b)} \triangleq \frac{\partial}{\partial \theta_i} \ell(x_{\text{out}}^{(b)})\).
For convenience, write the local Jacobians
\begin{equation}
    \begin{aligned}
    J_{ij}^{(b)} \;\triangleq\; \partial_{x_j} f_i\!\bigl(\theta_i,(x_k^{(b)})_{k \to i}\bigr)^\top
    \\
    \text{and} \quad J_{i}^{(b)} \;\triangleq\; \partial_{\theta_i} f_i\!\bigl(\theta_i,(x_k^{(b)})_{k \to i}\bigr)^\top\,,
    \end{aligned}
\end{equation}
where \((\cdot)^\top\) denotes the adjoint.

Standard backwards message passing, using chain rule, writes, for a sample $b$
\begin{equation}
\label{eq:backprop_sample}
\left\{
\begin{aligned}
    g_i^{(b)} \;&=\; \sum_{j:\, i \to j} J_{ij}^{(b)}\, g_j^{(b)}, \\
    h_i^{(b)} \;&=\; J_{i}^{(b)} g_i^{(b)},
\end{aligned}
\right.
\end{equation}
where the recursion is seeded at the outputs with \(g_{\text{out}^{(b)}} = \frac{\partial}{\partial x_{\text{out}}} \ell(x_{\text{out}}^{(b)})\). 
Standard SGD would then use $h_i:=B^{-1}\sum_{b=1}^B h_i^{(b)}$ as the stochastic gradient for $\theta_i$. We shall also need the notation $g_i:=B^{-1}\sum_{b=1}^B g_i^{(b)}$.


\subsection{Gradient Estimation on a DAG}
We now study how layerwise variance propagates through the network. To this end, we introduce unbiased local Jacobian estimators $\widehat J_{ij}^{(b)}$ and propagate them through the layer cascade via
\begin{equation}
\left\{
\begin{aligned}
    \widehat g_i^{(b)} \;&=\; \sum_{j:\, i \to j} \widehat J_{ij}^{(b)}\,\widehat g_j^{(b)}, \\
    \widehat h_i^{(b)} \;&=\; \widehat J_{i}^{(b)}\, \widehat g_i^{(b)}, 
\end{aligned}
\right.
\label{eq:approx-backprop}
\end{equation}

We denote by $\widehat{g}_i = \tfrac{1}{B}\sum_b \widehat{g}_i^{(b)}$,  $\widehat{h}_i = \tfrac{1}{B}\sum_b \widehat{h}_i^{(b)}$the batch-averaged approximate gradients.
We then require
\begin{assumption}[Unbiasedness of local VJPs]
\label{as:local_vjp_unbiased}
For every sample $b$, node $i$ and edge $i \to j$,
\begin{equation}\begin{array}{l}
    \mathbb{E}\big[\hat J^{(b)}_{ij} | (\hat g^{(b')}_{k})_{k : i\to k, b' \in B}, (\hat J^{(b')}_{ik})_{i\to k, k\ne j, b' \in B} \big] = J^{(b)}_{ij},
    \\
\mathbb{E}\big[\hat{J}^{(b)}_i|\hat{g}^{(b)}_i]=J^{(b)}_i.
\end{array}
\end{equation}
\end{assumption}
We then have (see Appendix for a proof):
\begin{proposition}
    \label{prop:variance_backprop}
    Assume the seed at the output node is exact (\ie $\widehat{g}_{\text{out}} = g_{\text{out}}$) and that \cref{as:local_vjp_unbiased} holds.
    Then, for every node $i$ of the DAG,
    \begin{enumerate}
        \item[(i)] \emph{(Unbiasedness.)} $\;\mathbb{E}[\widehat g_i|g_i] \;=\; g_i$,  $\;\mathbb{E}[\widehat h_i|h_i] \;=\; h_i.$

        \item[(ii)] \emph{(Variance propagation.)}
        
        \begin{equation}
        \hspace*{-\leftmargin}%
        \begin{aligned}
        \mathbb{E} \big[\|\widehat{g_i} - &g_i \|^2 \big]
        = \sum_{j:i \to j} \mathbb{E}\!\left[\left\| \frac{1}{B}\sum_{b=1}^B
        (\widehat J_{ij}^{(b)} - J_{ij}^{(b)}) \widehat g_j^{(b)} \right\|^2 \right] \\
        &+ \mathbb{E}\!\left[\left\|\frac{1}{B} \sum_{b=1}^B\sum_{j:i \to j} \!
        J^{(b)}_{ij} (\widehat g^{(b)}_j - g^{(b)}_j) \right\|^2 \right].
        \end{aligned}
        \end{equation}

    \end{enumerate}
\end{proposition}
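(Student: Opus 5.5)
We argue by induction over the DAG in reverse topological order, starting from the output node, where $\widehat g_{\text{out}}=g_{\text{out}}$ gives both claims trivially (zero variance). For a node $i$ we assume unbiasedness has been established for all successors $j$ with $i\to j$, in the stronger per-sample form $\mathbb{E}[\widehat g_j^{(b)}]=g_j^{(b)}$ (the $g^{(b)}_j$ being deterministic given the minibatch and parameters). Everything is conditioned on the forward pass, and in that conditioning all exact quantities $J^{(b)}_{ij}$ and $g^{(b)}_j$ are fixed. Under this reading, conditioning on $g_i$ amounts to conditioning on the forward pass, and the tower property does the rest.

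\textbf{Step 1 (unbiasedness).} We start from the decomposition
\begin{equation*}
\widehat g_i^{(b)}-g_i^{(b)}=\sum_{j:i\to j}(\widehat J_{ij}^{(b)}-J_{ij}^{(b)})\widehat g_j^{(b)}+\sum_{j:i\to j}J_{ij}^{(b)}(\widehat g_j^{(b)}-g_j^{(b)}).
\end{equation*}
For the first sum we condition on $(\widehat g^{(b')}_k)_{k:i\to k,\,b'}$. By \cref{as:local_vjp_unbiased}, each term then has conditional mean zero. For the second sum, linearity and the induction hypothesis show that its mean is zero. Averaging over $b$ gives $\mathbb{E}[\widehat g_i]=g_i$. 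For $\widehat h_i$ we write
\begin{equation*}
\widehat h_i^{(b)}-h_i^{(b)}=(\widehat J_i^{(b)}-J_i^{(b)})\widehat g_i^{(b)}+J_i^{(b)}(\widehat g_i^{(b)}-g_i^{(b)}),
\end{equation*}
and the same argument applies, now using the second line of \cref{as:local_vjp_unbiased}.

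\textbf{Step 2 (variance).} We batch-average the decomposition and write $\widehat g_i-g_i=\sum_j A_j+C$, with
\begin{equation*}
A_j=\tfrac1B\sum_b(\widehat J_{ij}^{(b)}-J_{ij}^{(b)})\widehat g_j^{(b)},\qquad C=\tfrac1B\sum_b\sum_j J^{(b)}_{ij}(\widehat g^{(b)}_j-g^{(b)}_j).
\end{equation*}
Expanding the square leaves three kinds of cross terms to kill: $\mathbb{E}\langle A_j,C\rangle$, and $\mathbb{E}\langle A_j,A_{j'}\rangle$ for $j\neq j'$.
\begin{itemize}
\item For $\langle A_j,C\rangle$: $C$ is measurable with respect to $(\widehat g^{(b')}_k)$. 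Conditioning on this family, $A_j$ has zero conditional mean, so the cross term vanishes.
\item For $\langle A_j,A_{j'}\rangle$: we condition on $(\widehat g^{(b')}_k)_{k,b'}$ together with $(\widehat J^{(b')}_{ik})_{k\neq j,\,b'}$. This is exactly the conditioning set in \cref{as:local_vjp_unbiased}, and $A_{j'}$ is measurable with respect to it. The conditional mean of $A_j$ is then $\tfrac1B\sum_b\big(\mathbb{E}[\widehat J^{(b)}_{ij}\mid\cdot]-J^{(b)}_{ij}\big)\widehat g^{(b)}_j=0$. The assumption is conditional for each $b$ individually on the same $\sigma$-field, so linearity handles the batch sum.
\end{itemize}
This leaves $\sum_j\mathbb{E}\|A_j\|^2+\mathbb{E}\|C\|^2$, which is the claimed identity.

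The main obstacle is bookkeeping the $\sigma$-fields. We must check that $C$ and $A_{j'}$ are genuinely measurable with respect to the conditioning sets of \cref{as:local_vjp_unbiased}. We must also interpret the conditioning on $g_i$ in (i) correctly: since $g_i$ is deterministic given the forward pass, conditioning on it is implied by conditioning on the forward data. Note that cross terms across different samples $b\neq b'$ are never split, so no independence across samples is needed.
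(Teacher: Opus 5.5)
Your proof is correct, but it brings in the batch average at a different point than the paper does.

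The paper works sample by sample. It decomposes $\widehat g_i^{(b)}-g_i^{(b)}$ for fixed $b$, discards the cross term for that sample, and writes the resulting per-sample displays as if they were pointwise identities, although the cross terms only vanish in expectation. It then reaches the batch-averaged formula by ``averaging over the batch and using sample-wise independence'' to dispose of the cross-sample terms.

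You instead expand the square of the batch-averaged error $\sum_j A_j + C$ directly. You kill $\langle A_j,C\rangle$ and $\langle A_j,A_{j'}\rangle$ by conditioning on exactly the batch-wide $\sigma$-fields of \cref{as:local_vjp_unbiased}. For the second kind of cross term this uses the $(\widehat J^{(b')}_{ik})_{k\neq j}$ part of the conditioning set, which the paper never invokes explicitly.

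Your route gains three things.
\begin{enumerate}
\item It needs no independence across samples. This matters because in \cref{sec:mask} the sketch $R$ is shared across the batch, so the per-sample errors are in fact dependent and the paper's appeal to independence does not fit its own setting.
\item It explains why the assumption conditions on all $b'\in\mathcal B$ rather than on sample $b$ alone.
\item It makes clear that (ii) is a one-step identity resting only on the assumption; the induction and the exact seed are needed only for (i).
\end{enumerate}

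The paper's per-sample presentation is shorter and mirrors the recursion \eqref{eq:approx-backprop}. Yours is the rigorous version of the same computation. One small slip: you announce ``three kinds'' of cross terms but list two, and two is all there are.
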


The
variance at a node $i$ is thus composed of two terms: 
The first is the variance induced locally from the chosen local VJP approximation, while the second  corresponds to the variance stemming  from approximations done at the preceding nodes in backpropagation.
The first term in this decomposition motivates criteria for the choice of VJP approximation discussed in the \cref{sec:mask}. 

The second term in this decomposition informs whether added variance dampens or increases during backpropagation. Indeed, this second term can be upper-bounded, using Cauchy-Schwarz inequality, by
$$
\frac{|\{j:i\to j\}|}{B}\sum_{b=1}^B\sum_{j:i\to j}\mathbb{E} \|J^{(b)}_{ij}\|^2 \|
\widehat g^{(b)}_j - g^{(b)}_j\|^2,
$$
where $\|J^{(b)}_{ij}\|$ denotes operator norm. If these operator norms are sufficiently small, the approximation errors will thus dampen along backpropagation.

\section{Randomized Vector-Jacobian Products in Linear Settings} \label{sec:mask}
We now study the construction of estimators such that
\[
\widehat J_{ij}^{(b)}\,\widehat g_{j}^{(b)} \;\approx\; J_{ij}^{(b)}\, g_{j}^{(b)}.
\]
In the context of VJPs, a natural approach is to introduce a (random) sketching matrix, as defined below.

\paragraph{Randomized Sketching Framework.} In the proposed setting, for and edge $i \to j$, with $J_{ij} \in \mathbb{R}^{m \times n}$ and $\widehat g_j\in \mathbb{R}^n$, we consider a \emph{random sketching matrix} $R\in \mathbb{R}^{n \times n}$ such that $\mathbb{E}[R|\mathcal{F}] = I_n$  where $\mathcal{F}$ captures information relative to all other VJP approximations and batch sampling.  Then
\begin{equation}
   \forall b \in {1, ..., B},\, \widehat J_{ij}^{(b)} \,  = J_{ij}^{(b)} \, R,
\end{equation}so the estimator is locally unbiased, as in \cref{as:local_vjp_unbiased}. As motivated in \cref{subsec:noisy_grads}, we aim at lowering additional variance, \ie we aim at finding the sketch that minimizes the distortion, measured by $\mathsf{L}^2$ cost\footnote{Strictly speaking we consider conditional expectations with respect to $\mathcal {F}$, the $\sigma$-field capturing information relative to all other approximations and batch sampling. We do not make this explicit to lighten notation.}
\begin{align}
\label{eq:l2_cost}
    &\mathcal{L}(R) \,\triangleq\, \frac{1}{B} \, \sum_{b=1}^B \,  \mathbb{E}[\|J_{ij}^{(b)} \widehat g_j^{(b)} - J_{ij}^{(b)} R \, \widehat g_j^{(b)} \|^2],\\
&= \frac{1}{B}\sum_{b=1}^B
\mathbb{E}\,\hbox{Tr}\left[
 J_{ij}^{(b)\top}J_{ij}^{(b)}(I-R)\widehat g_j^{(b)}\widehat g_j^{(b)\top}(I-R)^\top
\right]. \notag
\end{align}
This yields a quadratic objective in the compressed surrogate $R$, which preserves unbiasedness of the estimator for $J$ without adding variance. In the batch-wise regime, $R$ is shared across all $b \in \mathcal{B}$; an exact solution follows from the sketching lemma below.



\if False
\paragraph{Linear nodes}
\killian{J'imagine qu'il faut en parler avant les proposition/corollaire, puisque je m'en sers dans la preuve ?}
Then, the Jacobian \wrt activations is the same for every element of the batch, \ie $\forall b \in \mathcal{B}, J_{ij}^{(b)} = J_{ij} = W^\top$ and \eqref{eq:l2_cost} becomes

\begin{equation}
    \mathcal{L}(R) \,=\, \frac{1}{B} \, \sum_{b=1}^B \,  \mathbb{E}[\|J_{ij} \widehat g_j^{(b)} - J_{ij} R \, \widehat g_j^{(b)} \|^2].
\end{equation}
\fi

\paragraph{Optimal Unbiased Low-Rank Matrix Approximation.}

We first give a general result, which we will subsequently apply to solve the previous problem in specific settings:

\begin{lemma}[Optimal unbiased random sketch under rank constraint]
\label{thm:svd_sketch}
    Let $M$ be a fixed matrix in $\dR^{m\times n}$. Let $q=\min(m,n)$ and denote the SVD of $M$ by $M=\sum_{i=1}^q \sigma_i u_i v_i^\top$, where $\sigma_i$, $u_i$, $v_i$ are respectively its singular values, left and right singular vectors. Then among matrices $S$ of rank at most $r$ where $r<q$, such that $\mathbb{E}[S]=M$, the choice that minimizes the error in squared Frobenius norm $\dE\|M-S\|^2_F$ is obtained by letting 
    $$
    S=\sum_{i=1}^q \sigma_i \frac{Z_i}{p_i}u_i v_i^\top$$
    where the $Z_i$ are correlated Bernoulli random variables with respective parameters $p_i$. The $Z_i$ are such that $\sum_{i=1}^q Z_i\equiv r$, and the $\{p_i\}_{i\in [q]}$ are minimizers of $\sum_{i=1}^q \sigma^2_i/p_i$ among weights $p_i\in [0,1]$ that sum to $r$.


\end{lemma}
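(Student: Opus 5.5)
The plan is to reduce the problem to a lower bound on $\mathbb{E}\|S\|_F^2$, which can then be matched by the proposed construction. By unbiasedness, $\mathbb{E}\|M-S\|_F^2=\mathbb{E}\|S\|_F^2-\|M\|_F^2$, so it suffices to minimize $\mathbb{E}\|S\|_F^2$ over unbiased random matrices of rank at most $r$. I will work in the singular bases: let $U=[u_1,\dots,u_m]$ and $V=[v_1,\dots,v_n]$ be completed orthonormal bases, and write $\tilde S=U^\top S V$. Then $\mathbb{E}[\tilde S]=\Sigma$, the rectangular diagonal matrix of singular values, $\operatorname{rank}\tilde S\le r$, and $\|\tilde S\|_F=\|S\|_F$. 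The objective is therefore invariant, and we may assume $M=\Sigma$.

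\textbf{Step 1 (lower bound).} For a realization $\tilde S$ of rank at most $r$, let $P$ be the orthogonal projector onto its column space. This is random, with $\operatorname{Tr}P\le r$. Then $\tilde S=P\tilde S$, and in particular $\tilde S_{ii}=e_i^\top P\tilde S e_i$. Set $p_i:=\mathbb{E}[e_i^\top P e_i]=\mathbb{E}\|Pe_i\|^2\in[0,1]$, so that $\sum_i p_i\le r$. By Cauchy--Schwarz, $\tilde S_{ii}^2\le \|Pe_i\|^2\,\|\tilde S e_i\|^2$. Taking expectations and applying Cauchy--Schwarz again gives
\[
\sigma_i^2=(\mathbb{E}\tilde S_{ii})^2=\bigl(\mathbb{E}[e_i^\top P\,\tilde S e_i]\bigr)^2\le \mathbb{E}\|Pe_i\|^2\,\mathbb{E}\|\tilde Se_i\|^2 .
\]
Hence $\mathbb{E}\|\tilde S e_i\|^2\ge \sigma_i^2/p_i$ (with the convention that this is $+\infty$ when $p_i=0<\sigma_i$). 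Summing over the columns $i\le q$ and dropping the remaining nonnegative columns yields
\[
\mathbb{E}\|S\|_F^2\ge \sum_{i=1}^q\sigma_i^2/p_i\ge \min\Bigl\{\sum_i\sigma_i^2/p_i:\ p_i\in[0,1],\ \sum_i p_i\le r\Bigr\}.
\]
This minimum is attained with $\sum_i p_i=r$, since the objective is decreasing in each $p_i$ and $r<q$ leaves room to increase any $p_i$.

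\textbf{Step 2 (achievability).} Take the minimizing $p_i$, which sum to the integer $r$. To build Bernoulli variables $Z_i$ with marginals $p_i$ and $\sum_i Z_i\equiv r$, I will use systematic sampling: lay the intervals of lengths $p_i$ consecutively on $[0,r)$, draw a single $U\sim\mathrm{Unif}[0,1)$, and select the $i$ whose interval contains one of the points $U, U+1,\dots,U+r-1$. Because $p_i\le1$, each interval is hit at most once, and it is hit with probability $p_i$. The resulting $S=\sum_i \sigma_i (Z_i/p_i)u_iv_i^\top$ is unbiased and has rank at most $r$. Its error is $\mathbb{E}\|S\|_F^2-\|M\|_F^2=\sum_i\sigma_i^2/p_i-\sum_i\sigma_i^2$, using orthogonality of the rank-one terms $u_iv_i^\top$ and $\mathbb{E}Z_i^2=p_i$. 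This matches the lower bound from Step 1.

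The main obstacle is Step 1. One has to produce the right per-coordinate "inclusion probability" $p_i$ for an arbitrary rank-$r$ random matrix and verify that the bound $\sum_i p_i\le r$ still holds. The projector trick handles both points: it provides a random projector whose trace is at most $r$ and whose diagonal dominates the Cauchy--Schwarz step. A minor side point is the degenerate case $\sigma_i=0$, where setting $p_i=0$ is allowed.
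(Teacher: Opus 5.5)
Your proof is correct, and it takes a genuinely different and much shorter route than the paper's.

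\textbf{The paper's route.} It argues spectrally. Convexity of the Ky Fan norms $\sum_{j\le i}\sigma_j(\cdot)$, combined with Jensen, shows that the expected singular values $\mathbb{E}\sigma_i(S)$ weakly majorize $(\sigma_i)$. It then fixes the threshold index $i_0$ and applies Jensen to a uniformly chosen one of the last $r-i_0$ nonzero singular values of $S$. A supporting-hyperplane (Abel summation) argument then shows that the resulting quadratic $h$ is minimized over the weak-majorization set at $\sigma$. This gives the explicit bound $\sum_{i\le i_0}\sigma_i^2+\bigl(\sum_{j>i_0}\sigma_j\bigr)^2/(r-i_0)$, which is only afterwards matched to the convex program in $p$ via KKT.

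\textbf{What your route buys.} Your column-space projector extracts, from an arbitrary competitor $S$, a feasible vector of inclusion probabilities $p_i=\mathbb{E}[P_{ii}]$. One Cauchy--Schwarz per diagonal entry then lands directly on the value of the convex program. No threshold computation, majorization theory or KKT matching is needed for the lower bound; the closed form can still be read off from KKT afterwards if wanted.

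It also buys generality. Your bound only uses $\sum_i p_i\le\mathbb{E}\operatorname{Tr}P=\mathbb{E}\operatorname{rank}S\le r$, so it holds under an expected-rank constraint. The paper's splitting of singular values, by contrast, needs $\operatorname{rank}S\le r$ almost surely. Since $\mathbb{E}\|S\|_F^2=\sum_i\sigma_i^2/p_i$ depends only on the marginals of the $Z_i$, your argument shows that independent Bernoulli sampling with the same $p_i$ is equally optimal in that relaxed class. Correlation therefore serves only to make the rank bound hold almost surely, not to reduce the distortion.

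\textbf{What the paper's route offers in return.} It gives the explicit optimal value directly. It also yields a weak-majorization statement on the expected singular values of any unbiased estimator, which may be of independent interest.

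A wording nit: when $\sum_i p_i<r<q$, it is \emph{some} $p_i<1$, not any $p_i$, that can be increased.
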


The proof is detailed in the Appendix; it proceeds by first establishing a lower bound on the expected squared Frobenius norm of any matrix $S$ such that $\mathbb{E}[S]=M$, using Jensen's inequality together with convexity properties of functions of singular values, combined with the so-called weak majorization theory. It then shows that this lower bound is reached by the specific choice described above. The procedure to sample correlated Bernoulli random variables with fixed sum is also detailed.

We recently found out that \citet{barnes2025unbiasedlowrankapproximationminimum} have independently shown \cref{thm:svd_sketch}, establishing the upper-bound and noticing that the lower-bound followed from the suppementary material in \citet{benzing2019optimalkroneckersumapproximationreal}. Our proof is however different and may thus be of independent interest.

\paragraph{Application to VJPs.}


The above Lemma has the following direct application:

\begin{lemma}[Distortion in Linear Nodes]
\label{lem:linear_distortion}
    Consider $i$ a Linear node,
    \begin{equation}
    \nonumber
        x_j(\xi) =  W_{ij} x_i(\xi) + \mathbf{b}, \quad W_{ij} \in \mathbb{R}^{d_{\text{out}} \times d_{\text{in}}}, \mathbf{b} \in \mathbb{R}^{d_{\text{out}}},
    \end{equation}
    and $g_i^{(b)}, b \in [\![1, B]\!]$ the gradients of a considered batch $\mathcal{B}$.
    Then the Jacobian \wrt activations is the same for element of the batch, and, denoting
    \begin{equation}
        \label{def:g_matrix}
        G := \big[ g^{(1)} \dots g^{(B)}\big] \in \mathbb{R}^{d_{\text{out}} \times B}
    \end{equation}
    the matrix whose columns are the backpropagated gradients, the distortion defined in \eqref{eq:l2_cost} becomes
    \begin{equation}
        \mathbb{E}\big[ \Tr \big(J^\top J (I-R) \Gamma_\mathcal{B} (I - R)^\top \big) \big],
    \end{equation}
    where $\Gamma_{\mathcal{B}} := \frac{1}{B} GG^\top$ denotes the empirical second moment matrix of the batch, and $J=W_{ij}$. 
\end{lemma}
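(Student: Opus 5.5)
The argument is a direct computation starting from the trace form of \eqref{eq:l2_cost}. It has three steps, carried out in order: identify the Jacobian of a linear node, factor the batch-independent parts out of the sum, and use linearity of trace and expectation to pull the batch average inside.

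\textbf{Step 1 (Jacobian of a linear node).} For $x_j = W_{ij}x_i + \mathbf{b}$, the map $x_i \mapsto x_j$ is affine. Its derivative with respect to $x_i$ is therefore $W_{ij}$ at every point, independently of $x_i^{(b)}$ and hence of $b$. Consequently $J_{ij}^{(b)}$ is the same matrix for all $b\in[\![1,B]\!]$. I write it $J$, identifying it with $W_{ij}$ as in the statement; up to the transpose convention of the adjoint, nothing below depends on which of $W_{ij}$ or $W_{ij}^\top$ is used, since only $J^\top J$ enters. This is the claimed batch-independence of the Jacobian.

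\textbf{Step 2 (factor out the batch-independent terms).} Substitute $J_{ij}^{(b)}=J$ into the second line of \eqref{eq:l2_cost}:
\[
\mathcal{L}(R)=\frac1B\sum_{b=1}^B \mathbb{E}\,\mathrm{Tr}\!\left[J^\top J (I-R)\,\widehat g_j^{(b)}\widehat g_j^{(b)\top}(I-R)^\top\right].
\]
The sketch $R$ is shared across the batch, so $J^\top J(I-R)$ and $(I-R)^\top$ do not depend on $b$. Moreover, the expectation is conditional on $\mathcal{F}$, so the backpropagated gradients $\widehat g_j^{(b)}$ are fixed and only $R$ is random.

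\textbf{Step 3 (pull the sum inside).} By linearity of trace and expectation, exchange $\frac1B\sum_b$ with $\mathbb{E}$ and $\mathrm{Tr}$. By bilinearity of matrix multiplication, the sum then moves inside the product onto the middle factor, giving
\[
\frac1B\sum_{b}\widehat g_j^{(b)}\widehat g_j^{(b)\top}=\frac1B GG^\top=\Gamma_{\mathcal{B}} .
\]
Here $G$ is the matrix with columns $g^{(b)}$, as defined in \eqref{def:g_matrix}; I identify its columns with the (approximate) incoming gradients $\widehat g_j^{(b)}$, which is the notation of the statement. The identity $GG^\top=\sum_b g^{(b)}g^{(b)\top}$ is the standard column expansion of the product. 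This yields the stated expression $\mathbb{E}[\mathrm{Tr}(J^\top J(I-R)\Gamma_{\mathcal{B}}(I-R)^\top)]$.

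There is no genuine obstacle. The only points needing care are bookkeeping: the transpose convention for $J$, the identification of $G$'s columns with the approximate gradients $\widehat g_j^{(b)}$, and the fact that $R$ is shared across the batch. The last point is essential, since it is what allows the sum over $b$ to pass inside the product.
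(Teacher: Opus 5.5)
Your proof is correct and follows the paper's argument. Because the Jacobian of a linear node does not depend on $b$ and the sketch $R$ is shared across the batch, linearity of trace and expectation gathers $\frac{1}{B}\sum_b \widehat g_j^{(b)}\widehat g_j^{(b)\top}$ into $\Gamma_{\mathcal{B}}$.

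One correction to your side remark: $J^\top J$ does depend on the transpose convention, since $W_{ij}^\top W_{ij}$ and $W_{ij}W_{ij}^\top$ even have different sizes. Only $J=W_{ij}^\top$, the backprop Jacobian sending $g_j$ to $g_i$, is compatible with $\Gamma_{\mathcal{B}}\in\mathbb{R}^{d_{\text{out}}\times d_{\text{out}}}$, so the statement's $J=W_{ij}$ should be read that way. Your computation is unaffected, because it only uses $J:=J_{ij}$.
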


\begin{proof}

    Expanding the expression of $\mathcal{L}$ and using the circularity of the trace, we obtain for a single batch $g^{(b)}$
    \begin{equation}
        \nonumber
        \mathbb{E}\|J(I-R) g^{(b)} \|^2 = \mathbb{E}\Tr \big(J^\top J (I-R) g^{(b)} g^{(b) \top} (I - R)^\top \big).
    \end{equation}
    Now, using the linearity of trace and expectancy, averaging over the batch yields
    \begin{equation}
        \nonumber
        \mathcal{L}(R)\! = \mathbb{E}[\Tr\! \big(J^\top J (I-R) (\frac{1}{B} \sum_{b=1}^B  g^{(b)}\! g^{(b) \top}) (I - R)^\top \big)],
    \end{equation}
    hence the result.
\end{proof}
This, combined with the previous Lemma, yields
\begin{proposition}[Minimal Distortion rank $r$ Unbiased Sketch]
\label{prop:optimal-column-sketch}
Let $J\in\mathbb{R}^{m\times n}$ and $\Gamma_{\mathcal{B}}$ defined as above. 
Define the symmetric matrix
\begin{equation}
    \label{eq:basis_svd}
    \Gamma_{\mathcal{B}}^{1/2} J^\top J \Gamma_{\mathcal{B}}^{1/2} \;=\; U\,\Sigma\,U^\top,
\end{equation}
where $U$ is orthogonal and $\Sigma=\mathrm{diag}(\sigma^2_1,\ldots,\sigma^2_n)$ with $\sigma_1\ge \cdots \ge \sigma_n\ge 0$.
Among all random matrices $R$ with $\mathbb{E}[R]=I_n$ and
whose 
rank is bounded by $r$,
a minimizer of the $\mathsf{L}^2$ cost \Cref{eq:l2_cost} is attained by sketches
that are diagonal in the eigenbasis of $\Gamma_{\mathcal{B}}^{1/2}J^\top J\Gamma_{\mathcal{B}}^{1/2}$:
\begin{equation}
    \begin{aligned}
        R^* &= \Gamma_{\mathcal{B}}^{1/2}\,U\,B\,U^\top\,\Gamma_{\mathcal{B}}^{-1/2} \\
        B &= \mathrm{diag}(\frac{z_1}{p^*_1},\dots,\frac{z_n}{p^*_n}), \text{ with } z_i \sim \mathcal{B}(p^*_i)
    \end{aligned}
    \label{eq:opt-R-form}
\end{equation}
with $\mathcal{B}(p)$ being the Bernoulli distribution of parameter $p$, and with selection probabilities $\{p^*_i\}_{i=1}^n$ minimizing $\sum_{i=1}^n \sigma^2_i/p_i$ under the constraint $\sum_i p_i=r$.

%
\end{proposition}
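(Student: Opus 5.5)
The plan is to reduce the distortion of \Cref{lem:linear_distortion} to the unbiased low-rank approximation problem of \cref{thm:svd_sketch}. First assume $\Gamma_{\mathcal{B}}$ is invertible; the singular case is handled at the end by restricting to its range. By circularity of the trace, the distortion equals
\[
\mathbb{E}\,\big\| J(I-R)\Gamma_{\mathcal{B}}^{1/2}\big\|_F^2 .
\]
Let $M := J\Gamma_{\mathcal{B}}^{1/2}\in\mathbb{R}^{m\times n}$ and $S := JR\,\Gamma_{\mathcal{B}}^{1/2}$. The distortion is then $\mathbb{E}\|M-S\|_F^2$. Moreover, $\mathbb{E}[R]=I_n$ gives $\mathbb{E}[S]=M$, and $\mathrm{rank}(S)\le \mathrm{rank}(R)\le r$. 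So every admissible $R$ yields a matrix $S$ that is admissible for \cref{thm:svd_sketch}, with the same cost. Hence the optimal value of that lemma, $\sum_i \sigma_i^2/p_i^\star - \sum_i\sigma_i^2$, is a lower bound on the distortion over all admissible $R$.

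The singular values of $M$ are the square roots of the eigenvalues of $M^\top M=\Gamma_{\mathcal{B}}^{1/2}J^\top J\Gamma_{\mathcal{B}}^{1/2}=U\Sigma U^\top$. They are therefore exactly the $\sigma_i$ of \eqref{eq:basis_svd}, and the right singular vectors are the columns $u_i$ of $U$. Thus $M=\sum_i \sigma_i w_i u_i^\top$ with $w_i = Mu_i/\sigma_i$ for $\sigma_i>0$.

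For achievability, plug $R^\ast$ into the definition of $S$:
\[
S = J\Gamma_{\mathcal{B}}^{1/2}UBU^\top\Gamma_{\mathcal{B}}^{-1/2}\Gamma_{\mathcal{B}}^{1/2}=MUBU^\top=\sum_i \sigma_i \tfrac{z_i}{p_i^\ast} w_i u_i^\top .
\]
This is exactly the optimal sketch of \cref{thm:svd_sketch}, so $R^\ast$ attains the lower bound. One also checks that $R^\ast$ itself is admissible. First, $\mathbb{E}[B]=I$ gives $\mathbb{E}[R^\ast]=I_n$. Second, if the $z_i$ are coupled as in \cref{thm:svd_sketch} so that $\sum_i z_i\equiv r$, then $\mathrm{rank}(R^\ast)=\mathrm{rank}(B)\le r$. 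The statement writes $z_i\sim\mathcal{B}(p_i^\ast)$ without the coupling. I would make the correlated fixed-sum sampling explicit, since independent Bernoullis violate the rank constraint almost surely only with positive probability, not always.

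\textbf{Main obstacle: dimension bookkeeping.} \Cref{thm:svd_sketch} uses $q=\min(m,n)$ singular values and requires $r<q$. The proposition instead indexes $n$ eigenvalues, with $\sigma_i=0$ for $i>q$. For zero $\sigma_i$, any $p_i$ contributes $0$ to $\sum_i\sigma_i^2/p_i$. So the $p_i$ for $i>q$ can be set to $0$ (or arbitrary) without changing the cost, and the two optimization problems coincide once the convention $0/0=0$ is adopted. If $r\ge q$, take $p_i=1$ on the nonzero part to get zero distortion. Finally, for singular $\Gamma_{\mathcal{B}}$, replace $\Gamma_{\mathcal{B}}^{-1/2}$ by the pseudo-inverse. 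Then $S=MUBU^\top P$, where $P$ is the projector onto $\mathrm{range}(\Gamma_{\mathcal{B}})$, and $MP=M$. Since $U$ can be chosen with the null directions of $\Gamma_{\mathcal{B}}$ among its zero-eigenvalue vectors, the same argument goes through up to a continuity or limiting argument that I would verify separately.
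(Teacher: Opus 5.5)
Your core argument is the paper's. You write the distortion as $\mathbb{E}\|J(I-R)\Gamma_{\mathcal B}^{1/2}\|_F^2$, set $M=J\Gamma_{\mathcal B}^{1/2}$ and $S=JR\Gamma_{\mathcal B}^{1/2}$, take the lower bound from \cref{thm:svd_sketch}, and check that $R^\ast$ attains it. The paper instead drops the constant $-\Tr(J\Gamma_{\mathcal B}J^\top)$ and minimizes $\mathbb{E}\|S\|_F^2$, which is equivalent. Your direct verification $S=MUBU^\top=\sum_i\sigma_i\frac{z_i}{p_i^\ast}w_iu_i^\top$ is cleaner than the appendix, which writes $S^\star=MB$ with $B$ diagonal in the canonical basis and only afterwards ``writes $B$ in the eigenbasis''. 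You are also right that the fixed-sum coupling $\sum_i z_i=r$ is needed for $\mathrm{rank}(R^\ast)\le r$. When all $\sigma_i>0$ and $r<n$, your proof is complete. Here $\sigma_i>0$ for all $i$ means $J\Gamma_{\mathcal B}^{1/2}$ is injective, which forces $\Gamma_{\mathcal B}\succ0$, $\ker J=\{0\}$, $m\ge n$, hence $q=n$. This is the setting the paper's proof implicitly assumes: it asserts $\Gamma_{\mathcal B}\succ0$ and uses $n$ singular values of $M$.

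The gap is your last paragraph. The degenerate cases cannot be handled ``up to a limiting argument'', because there the minimum is genuinely not attained. Admissibility of $R^\ast$ needs $\mathbb{E}[B]=I_n$, hence $p_i>0$ for every $i$, including those with $\sigma_i=0$. Setting $p_i=0$ with $0/0=0$ gives $\mathbb{E}[B]_{ii}=0$, so your check ``$\mathbb{E}[B]=I$ gives $\mathbb{E}[R^\ast]=I_n$'' fails. Any $p_i>0$ there instead takes budget from the nonzero $\sigma_i$ and pushes $\sum_i\sigma_i^2/p_i$ strictly above the bound. The pseudo-inverse version is biased too: $\mathbb{E}[R^\ast]=\Gamma_{\mathcal B}^{1/2}(\Gamma_{\mathcal B}^{1/2})^{+}$ is the projector onto $\mathrm{range}(\Gamma_{\mathcal B})$.

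Here is a concrete counterexample. Take $n=3$, $m=2$, $\Gamma_{\mathcal B}=I_3$, $J=[I_2\ 0]$, $r=1$, so $\sigma=(1,1,0)$ and the bound is $\mathbb{E}\|S\|_F^2\ge 4$. An admissible $R$ has rank at most one, so $R=ab^\top$. Cauchy--Schwarz plus Jensen give $\mathbb{E}\|JR\|_F^2\ge\mathbb{E}\big[(|a_1b_1|+|a_2b_2|)^2\big]\ge 4$. Equality would force $|a_1b_1|+|a_2b_2|\equiv 2$ and $b_3=0$ a.s., hence $R_{33}=0$ a.s., contradicting $\mathbb{E}[R]=I_3$. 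The value $4$ is only approached: $p=(\frac{1-\varepsilon}{2},\frac{1-\varepsilon}{2},\varepsilon)$ gives $4/(1-\varepsilon)$.

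The same example with $r=2=q$ refutes your $r\ge q$ remark. Zero distortion would force $JR=J$, and then $\mathrm{rank}(R)\le 2$ gives $R_{33}=0$ a.s. So you should state the nondegeneracy assumption explicitly, or weaken ``minimizer'' to ``infimum'' otherwise. This is a defect of the statement itself, masked in the paper by the claim that $\Gamma_{\mathcal B}$ is positive definite. That claim fails whenever the batch size is below $n$, since $\mathrm{rank}\,\Gamma_{\mathcal B}$ is at most the batch size.
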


\begin{proof}[Sketch of proof]
    Expanding the quadratic form obtained in \cref{lem:linear_distortion} and using $\mathbb{E}[R]=I_n$ gives
    \begin{equation}
        \mathcal{L}(R) = -\Tr(J\Gamma_{\mathcal B}J^\top) \;+\; \mathbb{E}\left[ \Tr \big(J R\Gamma_{\mathcal B}R^\top J^\top\big)\right],
    \end{equation}
    so minimizing it reduces to minimizing the second term. Writing $S=J R \Gamma_{\mathcal{B}}^{1/2} $, we are thus looking for a matrix $S$ such that $\mathbb{E}[S]=M=J\Gamma_{\mathcal{B}}^{1/2}$, of rank at most $r$, and with minimal expected squared Frobenius norm. Lemma \ref{thm:svd_sketch} gives us the optimal choice for $S$, from which we can deduce the optimal choice for $R$.
  Details are given in the Appendix.
\end{proof}

\paragraph{Diagonal Sketches.}
In many neural network layers, restricting sketches to diagonal operators leads to significant implementation and memory advantages hence the following analysis of diagonal sketches (\eg fully masking coordinates) under the same budget constraints.

\begin{lemma}[Diagonal mask with expected size at most $r$]
\label{cor:diag-mask}
We now restrict $R$ to be diagonal, $R=\mathrm{diag}(r_1,\ldots,r_n)$ with
\[
r_i \;=\; z_i/p_i, \quad
z_i \sim \mathcal{B}(p_i)\, \quad \text{independent,}\quad
p_i \in ]0,1],
\]
so that $\mathbb{E}[R]=I_n$.  For such diagonal mask, under the expected rank constraint $\sum_{i=1}^n p_i\le r$, 
the minimal $\mathsf{L}^2$ distortion \Cref{eq:l2_cost} is obtained be choosing probabilities $p_i$ that minimize 
$
\inf_{p}\sum_{i=1}^n \frac{a_i}{p_i} $ under the constraint $\sum_i p_i\le r$, where 
$
a_i \;:=\; (\Gamma_{\mathcal{B}})_{ii}\,(J^\top J)_{ii}$.
\end{lemma}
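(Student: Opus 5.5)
We plan to compute the distortion of \cref{lem:linear_distortion} explicitly for the diagonal Bernoulli mask, and then observe that the only remaining free parameters are the $p_i$. As in the sketch of proof of \cref{prop:optimal-column-sketch}, we expand the quadratic form and use $\mathbb{E}[R]=I_n$. This gives
\begin{equation}
\nonumber
\mathcal{L}(R) = \mathbb{E}\,\Tr\big(J^\top J\, R\,\Gamma_{\mathcal B}\,R^\top\big) - \Tr\big(J^\top J\,\Gamma_{\mathcal B}\big).
\end{equation}
The problem therefore reduces to computing the second moment $\mathbb{E}[R\Gamma_{\mathcal B}R^\top]$ for diagonal $R$.

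Write $A:=J^\top J$ and $\Gamma:=\Gamma_{\mathcal B}$. Since $R$ is diagonal, $(R\Gamma R)_{kl}=r_k r_l \Gamma_{kl}$. Independence of the $z_i$ gives $\mathbb{E}[r_k r_l]=1$ for $k\neq l$. For $k=l$ we get $\mathbb{E}[r_k^2]=\mathbb{E}[z_k]/p_k^2=1/p_k$. Hence
\begin{equation}
\nonumber
\mathbb{E}[R\Gamma R]=\Gamma+\mathrm{diag}\Big(\big(\tfrac{1}{p_k}-1\big)\Gamma_{kk}\Big)_{k}.
\end{equation}
Substituting and using $\Tr(A\,\mathrm{diag}(d))=\sum_k A_{kk}d_k$, the $\Tr(A\Gamma)$ terms cancel. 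We are left with
\begin{equation}
\nonumber
\mathcal{L}(R)=\sum_{i=1}^n \Big(\frac{1}{p_i}-1\Big)A_{ii}\Gamma_{ii}=\sum_{i=1}^n\frac{a_i}{p_i}-\sum_{i=1}^n a_i .
\end{equation}
The term $\sum_i a_i$ does not depend on $p$. Minimizing $\mathcal{L}$ over admissible masks is thus exactly minimizing $\sum_i a_i/p_i$ under $\sum_i p_i\le r$ and $p_i\in\,]0,1]$, which is the claim.

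One point needs care. \Cref{eq:l2_cost} is written with the random $\widehat g^{(b)}_j$ and conditional expectations with respect to $\mathcal F$. We will work conditionally on $\mathcal F$, so that $\Gamma_{\mathcal B}$ is fixed and independent of the mask, exactly as in \cref{lem:linear_distortion}. This conditioning is what justifies factoring the expectation as $\mathbb{E}[R\Gamma R]$.

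The step needing the most attention is the second-moment computation. Independence is used only through $\mathbb{E}[r_kr_l]=1$ for $k\ne l$. The off-diagonal entries of $A$ and $\Gamma$ then drop out completely, which is why only the diagonal products $a_i$ appear.

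We may also remark how the optimum looks. If some $a_i=0$, the infimum is approached by letting $p_i\to 0$, so it need not be attained at $p_i>0$. Otherwise, the KKT conditions for this convex problem give $p_i=\min(1,\sqrt{a_i}/\lambda)$, with $\lambda$ chosen so that the budget is saturated. Saturation holds because the objective is decreasing in each $p_i$.
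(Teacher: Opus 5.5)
Your proof is correct and takes essentially the same route as the paper: the expected quadratic form is computed entrywise, independence of the $z_i$ removes every off-diagonal contribution, and $\mathbb{E}[r_i^2]=1/p_i$ on the diagonal gives $\sum_i a_i/p_i-\sum_i a_i$. The only difference is cosmetic: you first use $\mathbb{E}[R]=I_n$ to reduce to $\mathbb{E}\,\Tr(J^\top J\,R\,\Gamma_{\mathcal B}R)$, whereas the paper expands directly in the centered factors $(1-r_i)(1-r_j)$.
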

The details are provided in the Appendix. Note that here we have considered independent Bernoulli random variables, and have thus imposed a constraint on the  rank in expectation rather than almost surely. This modification is required for the proof to go through. 
\section{Jacobian Approximations} \label{sec:practical_masks}
In this section, we look over different practical Jacobian Approximation methods 
that elaborate on those in the previous section.
\subsection{First Strategy: Applying Uniform Masks} 
\label{subsec:masking_operators}
We first discuss simple uniform masks, common in the literature, which serve as baselines for approximating vector–Jacobian products.
We denote them as $M$ (as opposed to geometrically informed sketches $R$), and their goal is still to approximate VJPs, \ie $JMg \approx Jg$.

\paragraph{Per-Element Masks.}
Per-element masking is common in practical deep leaning settings \citep{srivastava2014dropout,lee2019mixout}, and is the first considered approximation.
Let $p \in (0, 1]$ a parameter so that $M_{kl} = B_{kl}/p$, $B_{kl}\sim\mathcal{B}(p)$, and $\mathbb{E}[M_{kl}]=1$.
Element-wise application $\widehat J \;=\; M \odot J$ gives an unbiased estimator of the Jacobian.
It is trivial to verify that $\mathbb{E}[\widehat J] \;=\; J$, hence unbiasedness. 
For such masks, computational load reduction may stem from the element-wise sparsity of the approximating matrix rather than from a   reduced rank property.



\paragraph{Per-Column Masks.}
A second simple strategy is to apply independent Bernoulli gating per column (or channel) of $J$.
For a certain $p \in (0, 1]$, draw $z_i \stackrel{\text{i.i.d.}}{\sim} \mathcal{B}(p)$ and define an unbiased and expected rank $r$ mask as
\begin{equation}
    M \;=\; \operatorname{diag} \big(z_1/p, \dots, z_n/p \big).
\end{equation}


\paragraph{Per-Sample Masks.}
The last investigated masking method, inspired by \citet{woo2024dropbp}, consists in applying a Bernoulli gate to the whole product at sample level.
For a fixed $p \in (0, 1]$, draw a scalar gate $z \sim \mathcal{B}(p)$ and set
\begin{equation}
    M = (z/p) I_n.
\end{equation}
This $M$ also satisfies unbiasedness and expected low-rank.


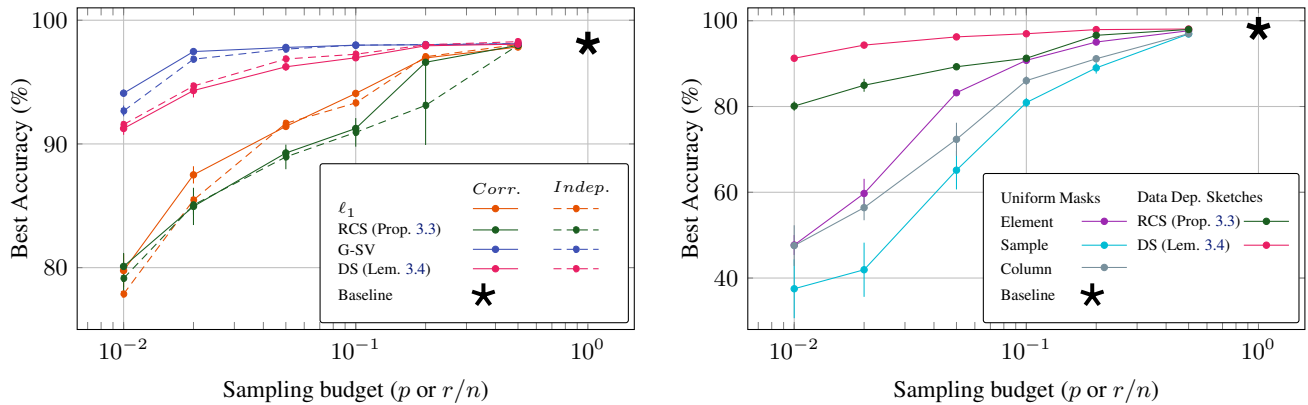
\begin{figure*}[t]
    \centering

    \begin{subfigure}[t]{0.48\textwidth}
        \centering
        \input{figures_tex/corr_vs_no_corr_2}
        \caption{Impact of correlation in Bernoulli sampling}
        \label{fig:corr_vs_no_corr}
    \end{subfigure}
    \hfill
    \begin{subfigure}[t]{0.48\textwidth}
        \centering
        \input{figures_tex/uninformed_vs_score}
        \caption{Comparison between \emph{Masking} and \emph{Sketching} Methods}
        \label{fig:uninformed_vs_score}
    \end{subfigure}

    \caption{Comparison of sampling strategies and scoring methods.}
    \label{fig:correlation_and_weighted}
\end{figure*}

\subsection{Second Strategy: Data Dependent Sketches}
\label{subsec:sketching_operators}
We now turn to a class of \emph{Data Dependent} sketching operators that allocate the sketching budget according to data-dependent importance weights.
Recall from \cref{sec:mask} that under unbiasedness and rank constraints, minimizing the distortion of a VJP approximation reduces to a convex optimization problem over the sampling probabilities:
\begin{equation}
    \label{eq:convex_program}
    \min \; \sum_{i=1}^n \frac{w_i}{p_i}, \quad \st \sum_{i=1}^n p_i \le\; r, \; p_i \in (0, 1].
\end{equation}
Here $p_i$ denote the probabilities defining the sketch, while the weights $w_i$ quantify the relative importance of each direction as induced by the considered sketching constraints.
We hereafter describe some possible $w_i$ choices leading to sketching operators.

\paragraph{Rank-Constraint Sketches (RCS).}
We first consider the construction in \cref{prop:optimal-column-sketch}.
With solely unbiasedness and rank restrictions imposed to $R$, the importance weights $w_i$ correspond to the squared singular values $\sigma_i^2$ of the matrix $ \Gamma_\mathcal{B}^{1/2} J^\top J \Gamma_\mathcal{B}^{1/2}$, \ie $p_i^\star$ are obtained by solving \eqref{eq:convex_program} with $w_i = \sigma_i^2$, and $R$ is sampled as a diagonal matrix of Bernoulli entries, parameterized by $p_i^\star$.
The resulting sketching operator uses these probabilities to sample directions in the eigenbasis of $\Gamma_\mathcal{B}^{1/2} J^\top J \Gamma_\mathcal{B}^{1/2}$ (see $R^\star$ construction in \cref{prop:optimal-column-sketch}).

\paragraph{Diagonal Sketches (DS).} 
A variant is obtained by restricting the sketching operator $R$ to be diagonal, as studied in \cref{cor:diag-mask}.
Under this constraint, \eqref{eq:convex_program} admits importance weights of the form $w_i = (\Gamma_\mathcal{B})_{ii} (J^\top J)_{ii}$.

This estimator generalizes per column masking by allowing non-uniform, data-dependent sampling probabilities.

\paragraph{Alternative Weights Proxies.}
Beyond these two theoretically grounded constructions, \eqref{eq:convex_program} provides a more general design principle.
Once a set of non-negative importance weights $w_i$ is specified, the resolution gives an unbiased sketch under a budget constraint, by sampling with probabilities proportional to $\sqrt{w_i}$.

Therefore, we explore alternative, computationally cheaper proxies for the weights $w_i$ that do not strictly correspond to the quantities derived in \cref{sec:mask}.
In particular, we consider $\ell_1$-, $\ell_2$-norms, and empirical variances of gradient coordinates as simple measures of magnitude and dispersion; we denote the corresponding proxy-based sketch $\ell_1$, $\ell_2$ and $\Var$.
We also consider importance weights based on the singular value decomposition of the batch gradient matrix $G$ (as defined in Eq. \eqref{def:g_matrix}), and refer to this strategy as $G$-Singular Values (G-SV).
Finally, since the optimal probabilities scale with $\sqrt{w_i}$, we additionally evaluate the squared versions of these proxies.

\section{Numerical Analysis} \label{sec:numerical_analysis}

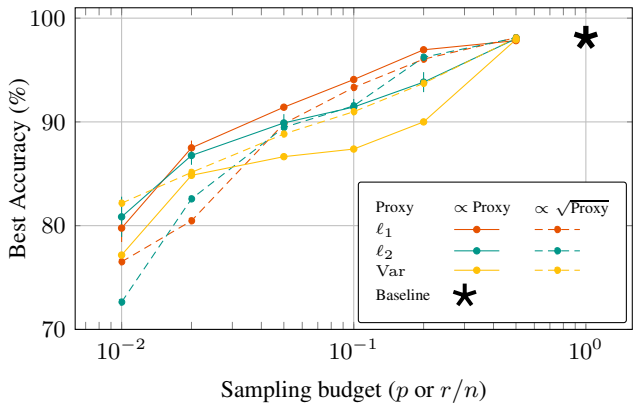
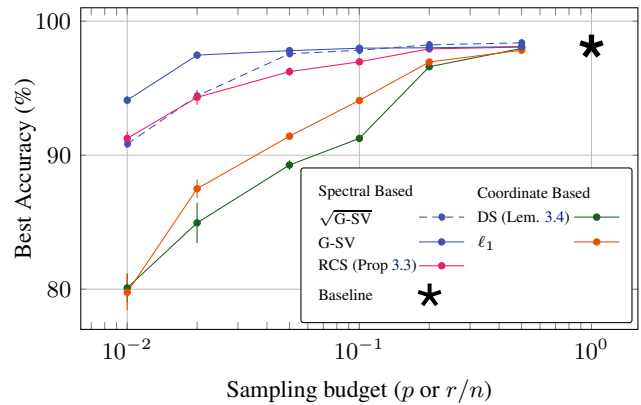
\begin{figure*}[t]
\captionsetup[subfigure]{skip=2pt}
    \centering
    \begin{subfigure}[t]{0.48\textwidth}
        \centering
        \input{figures_tex/scores_comparison}
        \caption{Comparison of simple proxies}
        \label{fig:scores_comp}
    \end{subfigure}
    \hfill
    \begin{subfigure}[t]{0.48\textwidth}
        \centering
        \input{figures_tex/svd_vs_no_svd}
        \caption{Comparison of spectral and coordinates based methods}
        \label{fig:svd_vs_no_svd}
    \end{subfigure}

    \caption{Comparison of types of weighted methods.}
    \label{fig:scores_all}
\end{figure*}

\paragraph{Experimental Setting.}
We first train 4-layer MLPs on MNIST \citep{deng2012mnist}: input dimension 784, two hidden layers of width \(64\), and a \(10\)-way output.
Training uses SGD without momentum nor a learning-rate schedule, for 50 epochs with cross-entropy loss, and gradient clipping at norm \(1\).
For each seed, the learning rate is cross-validated over the grid $\{10^{-0.25i} \mid i \in [\![0,12]\!]\}$, and we report results for the best-performing value.
In the figures, we approximate VJPs at all layers, except for the baseline, obtained with a standard training without any VJP approximation.





\paragraph{Correlation in Sampling.}
As discussed in \cref{thm:svd_sketch}, directions can be sampled either coordinate-wise independently, or by using correlated sampling schemes to ensure a fixed size rank.
This latter construction reduces variability in budget allocation across iterations.
In practice, we observe that enforcing this rank constraint leads to slightly improved performance, particularly in low-budget regimes, although the effect remains relatively modest.
Results are reported in \cref{fig:corr_vs_no_corr}, and we adopt the correlated variant as the default for the subsequent discussions.


\paragraph{Uniform Masks.}
We now compare methods with one another and start by looking at uniform masking methods in opposition to data-dependent sketches (see Section  \ref{sec:practical_masks}).
As the former are agnostic to local VJP's characteristics, they provide natural baselines that isolate the effect of injecting noise.
\cref{fig:uninformed_vs_score} shows that the latter consistently outperform the three agnostic methods, indicating that leveraging local information leads to visible empirical gains.


\paragraph{Weight Proxies Comparison.}
In \cref{sec:practical_masks}, importance weights-based methods were introduced.
\cref{fig:scores_comp} reports a comparison between different weights proxies strategies.
All methods yield very similar accuracy curves, indicating that no proxy incontestably dominates the others.
Nonetheless, $\ell_1$-based probabilities lie on the upper envelope of the curves.
Although the differences remain limited, this trend is consistent, and, therefore, $\ell_1$-based sampling is therefore used as a default choice in the subsequent experiments.

\paragraph{Coordinate- VS Spectral-Based Strategies.}
We now consider strategies that rely on SVD, namely Rank Constrained Sketching and the approach based on singular values of $G$ and compare them to methods grounded in the canonic coordinate space.
These approaches are computationally more expensive than aforementioned proxies, and are hence expected to reach better accuracy.
\Cref{fig:svd_vs_no_svd} confirms this intuition: spectral methods consistently outperform simpler proxies across the range of sampling budgets.
G-SV strategy yields better results than its square root counterpart and is retained in the following experiments.


\if False
\paragraph{Training Loss.}
\killian{Cette partie (avec le tableau) c'est peut être un peu se tirer une balle dans le pied, faut-il laisser ?}
\cref{tab:loss_isoflops} reports the final training loss achieved by the different approximation methods, compared to the baseline, under a normalized computational budget.

To account for the reduced cost of approximate VJPs, the number of training iterations is scaled according to the effective per-iteration computational ratio.
Concretely, if all layers are approximated with a ratio $p$, then one
baseline iteration is considered equivalent to $1/p$ iterations of the  corresponding method.
We expose here results for $p=0.5$ and $p=0.05$.
SVD based methods outperform the baseline in both configurations, while other methods are struggling in the most aggressive approximation configuration ($p=0.05$).

\begin{table}[h]
\centering
\small
\setlength{\tabcolsep}{4pt}
\begin{tabular}{l c c l c c}  
\hline
\textbf{Model} & \textbf{Scaling} & \textbf{Iteration} & \textbf{Method} & \multicolumn{2}{c}{\textbf{Train Loss}} \\
\hline
\multirow{12}{*}{MLP}      & 1                       & 100                     & Baseline & 1.027 \\
\hdashline
                          & \multirow{6}{*}{2} & \multirow{6}{*}{200} & EW. Masking & 1.080 \\
                          &                    &                      & Partial Participation & 0.892 \\
                          &                    &                      & $\ell_1$ & 0.795 \\
                          &                    &                      & G-Singular Values &  0.640 \\
                          &                    &                      & \cref{prop:optimal-column-sketch} & 0.693 \\
                          &                    &                      & \cref{cor:diag-mask}&  0.729 \\
\hdashline
                          & \multirow{6}{*}{20}& \multirow{6}{*}{2000}& EW. Masking & 2.276 \\
                          &                    &                      & Partial Participation & 2.262 \\
                          &                    &                      & $\ell_1$ & 1.205 \\
                          &                    &                      & G-Singular Values &  0.386 \\
                          &                    &                      & \cref{prop:optimal-column-sketch} & 0.569 \\
                          &                    &                      & \cref{cor:diag-mask}&  2.246 \\
\hline
\hline
\end{tabular}
\caption{Train loss for different methods.}
\label{tab:loss_isoflops}
\end{table}
\fi

\paragraph{BagNet and ViT.}

To assess scalability of our approaches, we conduct experiments with the CIFAR-10 dataset \citep{cifar10}, and larger architectures, namely BagNet-17 \citep{brendel2019approximating}, and Visual Transformers (ViT) \citep{dosovitskiy2021imageworth16x16words} of comparable sizes.
The former is conceptually and structurally close to ResNets \citet{he2015deepresiduallearningimage}, the key difference being that BagNet relies mainly on $1 \times 1$ convolutions, which we assimilate as linear layers and sketch.
The latter is a transformer, and thus consists in attention blocks followed by linear feed-forward layers, to which we apply sketching operators as well.

For both architectures, we established a baseline, and then applied our VJP approximation methods to all linear layers, except the output classification layer.
These approximations were performed for budgets $p \in \{0.05, 0.1, 0.2, 0.5\}$, with learning rates cross-validated over five logarithmically spaced values around the baseline setting.
Appendix provides training and ViT-specific architectural details.

\paragraph{Discussion:}
\cref{fig:bagnet_and_vit} reports the results obtained for six retained methods on each architecture.
Overall, these results are very encouraging, with Diagonal Sketching consistently emerging as a particularly strong choice.
On both architectures, the accuracy degradation of our approaches remains limited, even for relatively small values of the sampling parameter.
Furthermore, our plots clearly highlight the benefits of using data- and/or Jacobian-dependent strategies over uniform masking. 

Concerning RCS, although we exposed its optimality for local distortion, it does not systematically dominate simpler strategies in practice.
We conjecture that this is due to the local nature of the approximation: sketches are constructed independently at each VJP, without accounting for dynamics through subsequent layers.

\begin{figure*}[t]
    \centering

    \begin{subfigure}[t]{0.48\textwidth}
        \centering
        \input{figures_tex/all_methods_bagnet}
        \caption{Jacobian Approximation on BagNet-17}
        \label{fig:bagnet_comparison}
    \end{subfigure}
    \hfill
    \begin{subfigure}[t]{0.48\textwidth}
        \centering
        \input{figures_tex/all_methods_vit}
        \caption{Jacobian Approximation on ViT}
        \label{fig:vit_comparison}
    \end{subfigure}

    \caption{Sketching on larger architectures.}
    \label{fig:bagnet_and_vit}
\end{figure*}
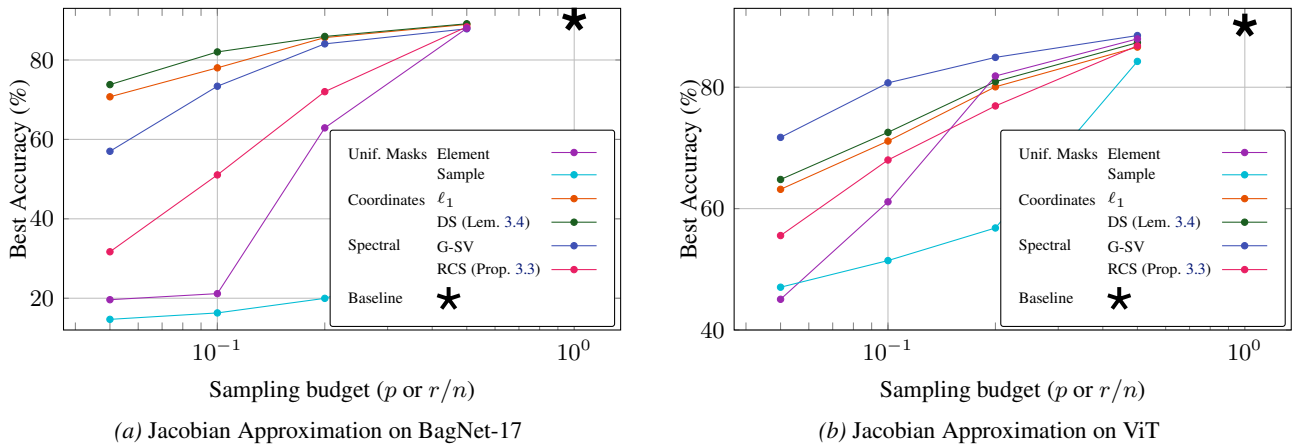
\section{Conclusion \& Future Work} \label{sec:conclusion}
In this work we initiated the study of noisy unbiased backpropagation in a computational DAG based on approximate VJPs. The analysis of how variance evolves during backpropagation led to novel design principles for unbiased VJPs under budget constraints. 

We then derived several VJP approximation techniques elaborating on these principles
and conducted an extensive empirical comparison on MLP architectures.
After establishing their effectiveness in this controlled setting, we further applied the proposed methods to larger models, where they similarly exhibit solid results and confirm the scalability of our approach.

Our results already show strong performance without any cross-layer coordination between sketches.
A natural next step is to design coordinated sketching strategies that account for variance propagation through the DAG and potentially improve general performance.
From a more practical point of view, estimating costly statistics intermittently rather than at each step, and/or jointly adapting optimization hyperparamters and sketching policies are promising future work directions.

\section{Related Work} \label{sec:related}

\paragraph{Weight Gradient Compression.}
Gradient compression in distributed and federated learning primarily acts \emph{after} backpropagation, directly on parameter gradients:
\[
\widehat h_i \;=\; \mathcal C(h_i),
\qquad \mathbb{E}[\mathcal C(h_i)\mid h_i]=h_i,
\]
covering classical unbiased compressors and their analyses \citep{wang2023cocktailsgd,konevcny2016federated,horvath2020better}.
When the compressor is (or is approximated by) a random linear map, $\mathcal C(v)=S_i v$, this becomes exactly our parameter-side sketch
$\widehat J_i = S_i J_i$.
Sparsification \citep{stich2018sparsified} corresponds to diagonal $S_i$ (mask-and-rescale), and error-feedback schemes \citep{richtarik2021ef21}
can be interpreted as adding a stateful correction term to compensate bias induced by repeatedly applying $S_i$.
A key distinction with our focus is \emph{where} the randomness enters:
weight-gradient compressors operate on the \emph{final} gradient signal $h_i$ and are largely agnostic to the internal Jacobian factorization,
whereas we target the \emph{intermediate} VJPs to trade compute/activation-traffic for variance. Note also that a parallel line of work seeks to overlap computation with weight gradients communication \cite{nabli2025acco}, that we could combine with the results of this paper.

\paragraph{Backward-pass Compression.}
Coordinate masking methods such as meProp \citep{sun2017meprop} correspond to diagonal sketches $R_{ij}$ that keep a subset of coordinates of $\widehat g_j$, in the line of sketches analyzed in Lemma~\ref{cor:diag-mask} (although we enforce unbiasedness).
More structured policies, e.g.\ skipping Jacobian applications on selected layers, corresponding to masking on the chosen edges,
which covers layer dropping schemes reported for MLP-like architectures \citep{fagnou2025accelerated,neiterman2024layerdropback}.
Pipeline settings that compress backward activations across devices also amount to picking $R_{ij}$ on inter-stage edges
(often diagonal masks or low-rank sketches), as in pipeline-oriented designs \citep{kong2025clapping,narayanan2019pipedream,huang2019gpipe}.
Batch-level masking of gradients \citep{bershatsky2022memory} is a special case where the same mask is applied to \emph{all} samples in a mini-batch.

\paragraph{Quantization and Randomized Arithmetic.}
Mixed precision and quantized training accelerate linear algebra in forward/backward passes \citep{micikevicius2017mixed,johnson2018rethinking,wiedemann2020dithered}.
Unbiased stochastic quantizers can be written as random maps $Q(v)$ with $\mathbb{E}[Q(v)\mid v]=v$; when $Q$ is coordinate-wise and unbiased,
it is equivalent to applying a random diagonal operator to $v$ in expectation, hence can be absorbed into our $R_{ij}$ acting on $\widehat g_j$.
In this sense, many “unbiased quantization” schemes are instances of our local-unbiased sketching assumption, differing mainly by the distributional form of $R_{ij}$.

\paragraph{Alternatives to Backprop.}
Methods such as direct feedback alignment or decoupled training replace the exact chain of Jacobians by synthetic or partially decoupled signals
\citep{nokland2016direct,rivaud2025petra,belilovsky2020decoupled,jaderberg2017decoupled}.
These approaches are sometimes framed as using \emph{different} operators $\widehat J_{ij}$ that are not of the form $J_{ij}R_{ij}$ with
$\mathbb{E}[R_{ij}]=I$; in particular, they generally do not preserve conditional unbiasedness of true gradients and may change the optimization objective.
By contrast, we keep the exact computational graph but randomize VJPs with unbiasedness constraints, so the only effect on SGD is via variance
(Section~\ref{subsec:noisy_grads}).

Unbiased alternatives that rely on forward-mode signals (JVPs) propagate random probes through the network and then form gradient estimators
\citep{ren2022scaling,fournier2023can}. At a high level, these can be viewed as constructing an unbiased estimator of the \emph{full-network} Jacobian action
using random sketches; however, the randomness is injected via forward-mode probes rather than via local VJP replacements.
Unbiased truncations for sequences \citep{tallec2017unbiased} can be seen as masking temporal edges (setting some Jacobian factors to zero) with appropriate
rescaling to keep expectations correct—again matching the “randomly drop factors but preserve unbiasedness” principle, though specialized to time-unrolled graphs.

A related but distinct direction applies masks in the \emph{forward} pass as well (hence changing the function being differentiated);
this can be written as replacing $J_{ij}$ itself by a different Jacobian $\widetilde J_{ij}$, which in general breaks our conditional-unbiasedness
requirement and can yield biased gradient surrogates (e.g.\ when the mask is not compensated in the backward pass) \citep{singh2025model}.
Our framework isolates the “safe” regime: preserve unbiasedness via $R_{ij}$  while controlling the induced variance.

\paragraph{Sketching for Linear Approximation.}
Independent of deep learning, masking and sketching are classical tools for efficient  sparse approximation \citep{kireeva2024randomized},
including kernel subsampling with compensation \citep{rudi2017falkon}, sketching for ridge regression \citep{gazagnadou2022ridgesketch},
and communication-efficient federated protocols based on sketches \citep{shrivastava2024sketching}.
Our contribution connects these sketching ideas to \emph{where} they appear in backpropagation: we study sketches as local unbiased replacements of VJPs
(i.e.\ choices of $R_{ij}$), and analyze how the induced variance propagates through the DAG (Proposition~\ref{prop:variance_backprop}),
thereby making the \emph{depth and placement} of sketching a first-order design choice rather than a purely systems-level consideration.

\section*{Acknowledgements}
EO acknowledges funding from PEPR IA (grant SHARP ANR-23-PEIA-0008). He was granted access to the AI resources of IDRIS under the allocation 2025-AD011015884R1.
KB thanks coworkers Pierre Aguié, Loïck Chambon and Alessia Rigonat for insightful discussions and/or technical support. He acknowledges funding from PEPR IA (grant REDEEM ANR-23-PEIA-0005) and was granted access to the AI resources of IDRIS under the allocation AD011017074.
LM acknowledges funding from PR[AI]RIE-PSAI – Paris School of Artificial Intelligence, reference ANR-23-IACL-0008.

The authors gratefully acknowledge Christophe Bouder for gracefully facilitating  access to SCAI GPU resources.

\section*{Impact Statement}

This paper presents work whose goal is to advance the field of Machine Learning. There are many potential societal consequences of our work, none which we feel must be specifically highlighted here.

\bibliography{unbiased_grads}
\bibliographystyle{icml2026}

\newpage
\appendix
\onecolumn

\section{Proofs}


\subsection{Proofs of \cref{sec:sketching}}

\begin{proposition}[Restatement of \cref{prop:variance_backprop}]
    Assume the seed at the output node is exact (\ie $\widehat{g}_{\text{out}} = g_{\text{out}}$) and that \cref{as:local_vjp_unbiased} holds.
    Then, for every node $i$ of the DAG,
    \begin{enumerate}
        \item[(i)] \emph{(Unbiasedness.)} $\;\mathbb{E}[\widehat g_i|g_i] \;=\; g_i$,  $\;\mathbb{E}[\widehat h_i|h_i] \;=\; h_i.$

        \item[(ii)] \emph{(Variance propagation.)}
        
        \begin{equation}
        \mathbb{E} \big[\|\widehat{g_i} - g_i \|^2 \big]
        = \sum_{j:i \to j} \mathbb{E}\!\left[\left\| \frac{1}{B}\sum_{b=1}^B
        (\widehat J_{ij}^{(b)} - J_{ij}^{(b)}) \widehat g_j^{(b)} \right\|^2 \right]
        + \mathbb{E}\!\left[\left\|\frac{1}{B} \sum_{b=1}^B\sum_{j:i \to j} \!
        J^{(b)}_{ij} (\widehat g^{(b)}_j - g^{(b)}_j) \right\|^2 \right]
        \end{equation}       
    \end{enumerate}
\end{proposition}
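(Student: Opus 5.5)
The plan is to argue by induction over the nodes in reverse topological order, starting from the output node. At the output we have $\widehat g_{\text{out}} = g_{\text{out}}$, so both claims hold trivially there. For a node $i$, all successors $j$ with $i\to j$ have already been treated.

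For the unbiasedness in (i), I first decompose
\[
\widehat g_i^{(b)} - g_i^{(b)} = \sum_{j:i\to j} (\widehat J_{ij}^{(b)} - J_{ij}^{(b)})\widehat g_j^{(b)} + \sum_{j:i\to j} J_{ij}^{(b)}(\widehat g_j^{(b)} - g_j^{(b)}).
\]
Conditioning on the successor adjoints $(\widehat g_k^{(b')})$ and using \cref{as:local_vjp_unbiased}, the first sum has zero conditional mean. The second sum has zero mean by the induction hypothesis, since the $J_{ij}^{(b)}$ are deterministic given the forward pass (all expectations are implicitly conditional on the forward computation and batch, as in the paper's footnote). Averaging over $b$ then gives $\mathbb{E}[\widehat g_i]=g_i$. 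For $\widehat h_i$, the tower property gives
\[
\mathbb{E}[\widehat J_i^{(b)}\widehat g_i^{(b)}] = \mathbb{E}\big[\mathbb{E}[\widehat J_i^{(b)}\mid \widehat g_i^{(b)}]\,\widehat g_i^{(b)}\big] = J_i^{(b)}\,\mathbb{E}[\widehat g_i^{(b)}] = h_i^{(b)}.
\]
To make the induction work, I will carry the stronger statement $\mathbb{E}[\widehat g_j^{(b)}]=g_j^{(b)}$ per sample, not just for the batch average.

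For (ii), I write the batch-averaged error as $A+C$, where $A=\sum_j A_j$ with $A_j=\frac1B\sum_b(\widehat J_{ij}^{(b)}-J_{ij}^{(b)})\widehat g_j^{(b)}$, and $C$ is the propagated term. Expanding $\|A+C\|^2$, there are two kinds of cross terms to eliminate.
\begin{itemize}
\item \emph{$A_j$ against $C$.} $C$ is measurable with respect to the successor adjoints $\widehat g^{(b')}_k$, so conditioning on these kills $\mathbb{E}\langle A_j, C\rangle$ by \cref{as:local_vjp_unbiased}.
\item \emph{$A_j$ against $A_k$ for $j\neq k$.} I condition on the successor adjoints together with $(\widehat J_{ik}^{(b')})_{b'}$. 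This is exactly the conditioning $\sigma$-field in \cref{as:local_vjp_unbiased} for edge $i\to j$, and $A_k$ is measurable with respect to it. Hence $\mathbb{E}[A_j\mid\cdot]=0$, so $\mathbb{E}\langle A_j,A_k\rangle=0$.
\end{itemize}
This leaves $\mathbb{E}\|A\|^2=\sum_j\mathbb{E}\|A_j\|^2$ plus $\mathbb{E}\|C\|^2$, which is the claimed identity.

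The main obstacle is the measurability bookkeeping in the $A_j$–$A_k$ cross term. The assumption conditions the edge $i\to j$ on the other edges' estimators at node $i$, but the symmetric statement for $k$ would condition on $j$'s estimators. So I will apply the assumption to only one edge of each pair, which suffices. I also need $\widehat g_j$ and the $\widehat J_{ij}$ to be square integrable so that all the expectations above are finite. Finally, I will interpret the conditioning ``$|g_i$'' in (i) as conditioning on the forward pass and batch, which the deterministic quantities $g_i$ and $h_i$ are functions of.
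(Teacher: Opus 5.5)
Your proof is correct. Part (i) is the paper's argument: backward induction carrying per-sample unbiasedness $\mathbb{E}[\widehat g_j^{(b)}]=g_j^{(b)}$, then the tower property for $\widehat h_i$.

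For (ii) you use the same local/propagated decomposition and the same mechanism for killing cross terms, but you bring in the batch average at a different point, and this buys you something.

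\emph{The paper's route.} It first derives a per-sample identity, writing $\|\sum_j(\widehat J_{ij}^{(b)}-J_{ij}^{(b)})\widehat g_j^{(b)}\|^2=\sum_j\|(\widehat J_{ij}^{(b)}-J_{ij}^{(b)})\widehat g_j^{(b)}\|^2$ as if it held pointwise. It then passes to the batch average by invoking ``sample-wise independence.'' That independence is not part of \cref{as:local_vjp_unbiased}, and it fails for a sketch $R$ shared across the batch, as in \cref{sec:mask}.

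\emph{Your route.} You expand $\|\sum_j A_j + C\|^2$ directly for the batch-averaged quantities. The conditioning $\sigma$-field in the assumption contains the successor adjoints and the other edges' estimators for \emph{all} $b'$. Your conditioning therefore kills every cross term $\langle A_j, C\rangle$ and $\langle A_j, A_k\rangle$ ($j\neq k$). It leaves the cross-sample terms within a single edge inside $\|A_j\|^2$ and $\|C\|^2$, which is exactly where the statement keeps them.

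So your argument needs no independence across samples and is the more rigorous version of the same proof. It also makes explicit the one-edge-per-pair use of the assumption, which the paper leaves implicit.
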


\begin{proof}
We proceed by backward induction over the DAG, starting from the output node.

\paragraph{Unbiasedness.}
For each sample $b$, the approximate backward recursion reads
\begin{equation}
    \widehat g^{(b)}_i = \sum_{j:i\to j} \widehat J^{(b)}_{ij}\,\widehat g^{(b)}_j,
\end{equation}
as stated in \eqref{eq:approx-backprop}.
Taking the conditional expectation given $(\widehat g^{(b)}_j)_{j:i\to j}$ and using \cref{as:local_vjp_unbiased} yields
\begin{equation}
    \mathbb{E}\big[\widehat g^{(b)}_i | (\widehat g^{(b)}_j)_j \big] = \sum_{j:i\to j} J^{(b)}_{ij}\,\widehat g^{(b)}_j .
\end{equation}
By the induction hypothesis, in reverse order from $g_{\text{out}}^{(b)}$, $\mathbb{E}[\widehat g^{(b)}_j]=g^{(b)}_j$, hence
\begin{equation}
\mathbb{E}[\widehat g^{(b)}_i] = \sum_{j:i\to j} J^{(b)}_{ij} g^{(b)}_j = g^{(b)}_i .
\end{equation}
Averaging over the batch gives $\mathbb{E}[\widehat g_i \mid g_i]=g_i$.

The same argument applies to $\widehat h^{(b)}_i=\widehat J^{(b)}_i\,\widehat g^{(b)}_i$, yielding $\mathbb{E}[\widehat h_i \mid h_i]=h_i$.

\paragraph{Variance propagation.}
For each sample $b$, we decompose
\begin{equation}
    \widehat g^{(b)}_i - g^{(b)}_i = \sum_{j:i\to j} (\widehat J^{(b)}_{ij}-J^{(b)}_{ij}) \widehat g^{(b)}_j + \sum_{j:i\to j} J^{(b)}_{ij}\,(\widehat g^{(b)}_j-g^{(b)}_j).
\end{equation}

Taking the squared norm gives
\begin{equation}
    \|\widehat g^{(b)}_i - g^{(b)}_i\|^2 = \left\| \sum_{j:i\to j} (\widehat J^{(b)}_{ij}-J^{(b)}_{ij}) \widehat g^{(b)}_j \right\|^2 + \left\| \sum_{j:i\to j} J^{(b)}_{ij} (\widehat g^{(b)}_j-g^{(b)}_j) \right\|^2 + 2\langle \cdot , \cdot \rangle .
\end{equation}
The cross-term $\langle \cdot , \cdot \rangle$ has zero expectation by conditional unbiasedness of $\widehat J^{(b)}_{ij}$ as stated in \cref{as:local_vjp_unbiased}.
Local unbiasedness of Jacobians also allows to write
\begin{equation}
    \|\widehat g^{(b)}_i - g^{(b)}_i\|^2 = \sum_{j:i\to j} \left\| (\widehat J^{(b)}_{ij}-J^{(b)}_{ij}) \widehat g^{(b)}_j \right\|^2 + \left\| \sum_{j:i\to j} J^{(b)}_{ij} (\widehat g^{(b)}_j-g^{(b)}_j) \right\|^2 .
\end{equation}
Finally averaging over the batch (and using sample wise independance) and taking expectations yields
\begin{equation}
    \mathbb{E}\|\widehat g_i-g_i\|^2 = \sum_{j:i\to j} \mathbb{E}\left\| \frac{1}{B} \sum_{b=1}^B (\widehat J^{(b)}_{ij}-J^{(b)}_{ij}) \widehat g^{(b)}_j \right\|^2 
    + \mathbb{E} \left\|\frac{1}{B} \sum_{b=1}^B \sum_{j:i\to j} J^{(b)}_{ij} (\widehat g^{(b)}_j-g^{(b)}_j) \right\|^2 .
\end{equation}

\end{proof}

\subsection{Proofs of \cref{sec:mask}}

\begin{lemma}[Restatement of \cref{thm:svd_sketch}]
    Let $M$ be a fixed matrix in $\dR^{m\times n}$. Let $q=\min(m,n)$ and denote the SVD of $M$ by $M=\sum_{i=1}^q \sigma_i u_i v_i^\top$, where $\sigma_i$, $u_i$, $v_i$ are respectively its singular values, left and right singular vectors. Then among matrices $S$ of rank at most $r$ where $r<q$, such that $\mathbb{E}[S]=M$, the choice that minimizes the error in squared Frobenius norm $\dE\|M-S\|^2_F$ is obtained by letting 
    $$
    S=\sum_{i=1}^q \sigma_i \frac{Z_i}{p_i}u_i v_i^\top$$
    where the $Z_i$ are correlated Bernoulli random variables with respective parameters $p_i$. The $Z_i$ are such that $\sum_{i=1}^q Z_i\equiv r$, and the $\{p_i\}_{i\in [q]}$ are minimizers of $\sum_{i=1}^q \sigma^2_i/p_i$ among weights $p_i\in [0,1]$ that sum to $r$.   
\end{lemma}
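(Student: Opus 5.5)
Since $\mathbb{E}[S]=M$, we have $\mathbb{E}\|M-S\|_F^2=\mathbb{E}\|S\|_F^2-\|M\|_F^2$, so it suffices to minimize $\mathbb{E}\|S\|_F^2$ over random $S$ with $\mathrm{rank}(S)\le r$ almost surely and $\mathbb{E}[S]=M$. The argument has two halves. The first is a lower bound, valid for every admissible $S$, of the form $\mathbb{E}\|S\|_F^2\ge \min\{\sum_i \sigma_i^2/p_i:\ p_i\in[0,1],\ \sum_i p_i=r\}$. The second shows that the proposed $S$ attains this bound.

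\textbf{Lower bound.} Write $s_1(S)\ge\dots\ge s_q(S)$ for the singular values of $S$; only the first $r$ are nonzero. The aim is to associate to $S$ a deterministic vector $p\in[0,1]^q$ with $\sum_i p_i\le r$ and to show $\mathbb{E}\|S\|_F^2\ge\sum_i\sigma_i^2/p_i$.

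The natural candidate comes from the variational formula $\|S\|_F^2=\sup\{\sum_i x_i^2/w_i\}$, which I would apply in the SVD basis of $M$. Set $x_i:=u_i^\top S v_i$, so that $\mathbb{E}[x_i]=\sigma_i$. For a rank-$r$ matrix $S$, let $P$ be the orthogonal projector onto its row space, a rank-$r$ projector. Then $w_i:=v_i^\top P v_i\in[0,1]$ satisfies $\sum_i w_i\le \mathrm{tr}(P)=r$.

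By Cauchy--Schwarz, $x_i^2=(u_i^\top S P v_i)^2\le \|S^\top u_i\|^2 w_i$. Hence $\sum_i x_i^2/w_i\le\sum_i\|S^\top u_i\|^2\le\|S\|_F^2$.

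Now set $p_i:=\mathbb{E}[w_i]$, which is deterministic with $\sum_i p_i\le r$. The map $(x,w)\mapsto x^2/w$ is jointly convex, so Jensen's inequality gives $\mathbb{E}\|S\|_F^2\ge\sum_i \mathbb{E}[x_i]^2/\mathbb{E}[w_i]=\sum_i\sigma_i^2/p_i$.

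Finally, raise the $p_i$ until $\sum_i p_i=r$; this is possible since $r<q$ and $p_i\le 1$, and it only decreases the objective. This yields the claimed lower bound. The main obstacle is this step: the paper phrases it via convex functions of singular values and weak majorization, whereas the Cauchy--Schwarz/projector route above avoids that. I would double-check the degenerate cases $w_i=0$, using the convention $0/0=0$, which is consistent because $x_i=0$ whenever $w_i=0$.

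\textbf{Achievability.} Take $p^*$ minimizing $\sum_i\sigma_i^2/p_i$ subject to $p_i\in[0,1]$ and $\sum_i p_i=r$. The KKT conditions give $p_i^*=\min(1,\sigma_i/\lambda)$, and a minimizer exists by compactness, with $p_i^*>0$ whenever $\sigma_i>0$.

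Next, construct Bernoulli variables $Z_i\sim\mathcal{B}(p_i^*)$ with $\sum_i Z_i\equiv r$, using systematic sampling. Lay the intervals of lengths $p_1^*,\dots,p_q^*$ consecutively on $[0,r)$, draw $U\sim\mathrm{Unif}[0,1)$, and let $Z_i=1$ iff interval $i$ contains one of the points $U,U+1,\dots,U+r-1$. Each interval has length at most $1$, so it contains at most one point, with probability exactly $p_i^*$. Exactly $r$ points are selected.

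Then $S=\sum_i\sigma_i(Z_i/p_i^*)u_iv_i^\top$ has rank at most $r$ and satisfies $\mathbb{E}[S]=M$. Moreover $\mathbb{E}\|S\|_F^2=\sum_i\sigma_i^2\,\mathbb{E}[Z_i^2]/p_i^{*2}=\sum_i\sigma_i^2/p_i^*$, which matches the lower bound, so this $S$ is optimal.
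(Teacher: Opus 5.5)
Your proof is correct. Its lower-bound half takes a genuinely different route from the paper's. The achievability half (systematic sampling with the points $U,U+1,\dots,U+r-1$) is the same construction the paper uses.

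The paper argues through the singular values of $S$ itself:
\begin{itemize}
\item Ky Fan convexity of $A\mapsto\sum_{j\le i}\sigma_j(A)$, together with Jensen, gives the weak majorization $(\sigma_j)\prec_w(\mathbb{E}\sigma_j(S))$.
\item It introduces the threshold index $i_0$ and bounds $\mathbb{E}\|S\|_F^2$ below by a quadratic function $h$ of the top $i_0$ expected singular values.
\item A supporting-hyperplane argument over the weak-majorization set then yields the explicit value $\sum_{i\le i_0}\sigma_i^2+\frac{1}{r-i_0}\bigl(\sum_{j>i_0}\sigma_j\bigr)^2$.
\item Only afterwards is this value identified with $\min\sum_i\sigma_i^2/p_i$ via KKT.
\end{itemize}

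You instead stay in the fixed singular bases of $M$ and attach to each realization the leverage weights $w_i=v_i^\top P v_i$ of the row-space projector of $S$. Cauchy--Schwarz and Jensen for the perspective function $x^2/w$ then give the bound directly in the form $\sum_i\sigma_i^2/p_i$ with a feasible $p$. So you need no majorization theory, no $i_0$, and no closed form. The KKT formula you quote is not actually used; existence of a minimizer $p^*$ suffices.

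Your route also gives a stronger statement. It only uses $\sum_i\mathbb{E}w_i\le\mathbb{E}[\operatorname{rank}S]$, so the same lower bound holds for every unbiased $S$ whose rank is bounded only in expectation. The paper's argument, as written, splits the singular values of $S$ into the top $i_0$ and the remaining $r-i_0$, and that split uses the almost-sure rank bound. Within this larger class, independent Bernoulli selection with marginals $p^*$ attains the same value $\sum_i\sigma_i^2/p_i^*$ and is therefore optimal too. In exchange, the paper's route produces the optimal value in closed form as part of the argument.
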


Let us first state the following Lemma:
\begin{lemma}
For fixed $i\in [q]$, the function $g_i$ defined on $\dR^{m\times n}$ by
$$
g_i(M):=\sum_{j=1}^i\sigma_j(M),
$$
that is the function returning the sum of the top-$i$ singular values  is convex.
\end{lemma}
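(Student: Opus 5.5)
We will prove convexity of $g_i(M)=\sum_{j=1}^i\sigma_j(M)$ by writing it as a pointwise supremum of linear functionals of $M$. Convexity then follows at once, since a supremum of linear (hence convex) functions is convex. The representation we aim for is the Ky Fan variational formula
\begin{equation*}
g_i(M)\;=\;\max\Big\{\operatorname{Tr}(U^\top M V)\;:\;U\in\dR^{m\times i},\,V\in\dR^{n\times i},\;U^\top U=I_i,\;V^\top V=I_i\Big\}.
\end{equation*}

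\textbf{Step 1: the bound is attained.} Take $U=[u_1,\dots,u_i]$ and $V=[v_1,\dots,v_i]$ from the SVD $M=\sum_j\sigma_j u_jv_j^\top$. Then
\[
\operatorname{Tr}(U^\top M V)=\sum_{j=1}^i\sigma_j ,
\]
so the supremum is at least $g_i(M)$.

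\textbf{Step 2: the reverse inequality.} Fix feasible $U,V$ and expand
\[
\operatorname{Tr}(U^\top M V)=\sum_{k=1}^q\sigma_k\,(v_k^\top V U^\top u_k)=\sum_{k}\sigma_k c_k,
\]
where $c_k:=\langle U^\top u_k, V^\top v_k\rangle$. By Cauchy--Schwarz, $|c_k|\le \tfrac12(\|U^\top u_k\|^2+\|V^\top v_k\|^2)=:\tfrac12(a_k+b_k)$.

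The weights satisfy two constraints. First, $0\le a_k\le 1$, because $UU^\top$ is an orthogonal projector. Second, $\sum_k a_k\le \operatorname{Tr}(U^\top U)=i$, because the $u_k$ are orthonormal. The same two constraints hold for the $b_k$.

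Hence the coefficients $w_k=\tfrac12(a_k+b_k)$ lie in $[0,1]$ and sum to at most $i$. Since $\sigma_1\ge\dots\ge\sigma_q\ge0$, maximizing the linear function $\sum_k\sigma_k w_k$ over this polytope puts weight $1$ on the top $i$ indices. This gives
\[
\operatorname{Tr}(U^\top M V)\le \sum_{k\le i}\sigma_k .
\]

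\textbf{Step 3: conclude.} Each map $M\mapsto \operatorname{Tr}(U^\top M V)$ is linear in $M$, so $g_i$ is a supremum of linear functions and is therefore convex. (Equivalently, $g_i$ is a norm, the Ky Fan $i$-norm, which also gives positive homogeneity.)

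The main obstacle is Step 2. Some care is needed when $m\ne n$: the $u_k$, $k\le q$, need not span $\dR^m$, but that only makes the bound $\sum_k a_k\le i$ easier to satisfy. One must also confirm that the linear-programming bound over the capped simplex uses the ordering of the singular values correctly. Both points are routine once stated this way.
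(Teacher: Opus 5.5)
Your proof is correct. Step 1 gives the lower bound. In Step 2 the chain $\operatorname{Tr}(U^\top MV)=\sum_k\sigma_kc_k\le\sum_k\sigma_k|c_k|\le\sum_k\sigma_kw_k\le\sum_{k\le i}\sigma_k$ is valid, and the capped-simplex bound uses the ordering of the $\sigma_k$ correctly. Two small clarifications. First, the bound $|c_k|\le\frac12(a_k+b_k)$ is Cauchy--Schwarz followed by AM--GM. Second, $\sum_k a_k\le i$ is cleanest if you complete $\{u_k\}$ to an orthonormal basis of $\dR^m$, so that $\sum_k\|U^\top u_k\|^2\le\|U\|_F^2=i$.

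The paper gives no argument at all; it simply cites Horn--Johnson (Thm.~4.3.27). Your route is a self-contained proof of the Ky Fan maximum principle for singular values, which is the standard reason the lemma holds, and you carry it out directly for rectangular $M$.

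Beyond the one-line citation, your route buys two things. It shows $g_i$ is sublinear (indeed a norm), not merely convex. It also makes the Jensen step in the proof of Lemma~\ref{thm:svd_sketch} immediate. Take $U,V$ optimal for $M=\dE S$; then $\dE g_i(S)\ge\dE\operatorname{Tr}(U^\top SV)=\operatorname{Tr}(U^\top MV)=g_i(M)$ by linearity of expectation alone, with no appeal to Jensen's inequality for convex functions of random matrices.
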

This follows for instance from Theorem 4.3.27, p. 194 in \citet{horn13}.

\begin{proof}[Proof of \cref{thm:svd_sketch}]
For all $i\in [q]$, we have by Jensen's inequality:
\begin{equation}\label{eq:weak_majorization_1}
\dE g_i(S)=\dE \sum_{j=1}^i \sigma_j(S)\ge g_i(\dE(S))=g_i(M)=\sum_{j=1}^i \sigma_j.
\end{equation}

Now, let $i_0$ be an integer such that
\begin{equation}
\label{eq:i_0}
\sigma_{i_0}> \frac{\sum_{j=i_0+1}^q \sigma_j}{r-i_0}\ge \sigma_{i_0+1},
\end{equation}
where if $\sigma_1\le r^{-1}\sum_{j=1}^q \sigma_j$, we let $i_0=0$.

Let us denote, for $i\in [i_0]$, $\dE \sigma_i(S)$ by $s_i$. We thus have
$$
s_1\ge \sigma_1,\; s_1+s_2\ge \sigma_1+\sigma_2,\ldots, \sum_{j=1}^{i_0}s_j \ge \sum_{j=1}^{i_0} \sigma_j.
$$
This property is also known as the fact that vector $(\sigma_j)_{j\in [i_0]}$ is weakly majorized by vector $(s_j)_{j\in [i_0]}$, also denoted by $\sigma \prec_w s$, see Definition (11), p. 12 in \citet{marshall11}. 

Split then the singular value decomposition of $S$ into the first $i_0$ singular values, $\sigma_1(S),\ldots,\sigma_{i_0}(S)$ and into the last non-zero singular values, $\sigma_{i_0+1}(S),\ldots, \sigma_{r}(S)$, and let $X$ be a uniformly random selection among these last $r-i_0$ singular values of $S$. Use now Inequality \eqref{eq:weak_majorization_1} with $i=q$ to get
$$
s_1+\ldots+ s_{i_0}+(r-i_0)\dE(X)\ge \sum_{j=1}^q \sigma_j,
$$
so that
$$
\dE(X)\ge \frac{1}{r-i_0}\left[\sum_{j=1}^q \sigma_j-\sum_{i=1}^{i_0}s_i\right].
$$
This implies
\begin{eqnarray}\label{eq:step_1}
\dE\|S\|_F^2=\sum_{i=1}^r \dE[\sigma_i(S)^2]\ge \sum_{i=1}^{i_0} s_i^2+(r-i_0)(\dE(X))^2
\\
\ge \sum_{i=1}^{i_0} s_i^2+\frac{1}{r-i_0}\left[ \sum_{j=1}^q \sigma_j-\sum_{i=1}^{i_0}s_i\right]^2.
\end{eqnarray}
Introduce the function
$$
h((x_i)_{i\in [i_0]}):=\sum_{i=1}^{i_0} x_i^2+\frac{1}{r-i_0}\left[ \sum_{j=1}^q \sigma_j-\sum_{i=1}^{i_0}x_i\right]^2,
$$
and consider the convex set 
$$
A:=\{(x_i)_{i\in [i_0]}\in \dR^{i_0}: h((x_i)_{i\in [i_0]}) < h((\sigma_i)_{i\in [i_0]})\}.
$$
Let us show that $A$ does not intersect with the set 
$$
B:=\{(x_1\ge \cdots\ge x_{i_0}: \sigma \prec_w x\}.
$$
To that end, note that both sets are convex. Note that, at $x=\sigma$, the gradient of $h$ reads
$$
\nabla h(\sigma)=2\left(\sigma_i -\frac{1}{r-i_0}\sum_{j=i_0+1}^q \sigma_j\right)_{i\in [i_0]}.
$$
Clearly, for all $x\in A$, 
$$
\langle \nabla h(\sigma), x-\sigma\rangle <0.
$$
Let now $x\in B$. To show that $A$ and $B$ are disjoint, it will suffice to prove that
$$
\forall x\in B,\; \langle \nabla h(\sigma),x-\sigma\rangle \ge 0.
$$
Let 
$$
\alpha:=\frac{1}{r-i_0}\sum_{j=i_0+1}^q \sigma_j.
$$
Write then
$$
\begin{array}{l}
\langle \nabla h(\sigma),x-\sigma\rangle=\sum_{i=1}^{i_0} (x_i -\sigma_i)\left[\sigma_i -\alpha\right]\\
=
\sum_{i=1}^{i_0} (x_i -\sigma_i)\left[\sigma_{i_0}-\alpha+ \sum_{j=i}^{i_0 -1}(\sigma_j -\sigma_{j+1})\right]\\
=(\sigma_{i_0}-\alpha)\sum_{i=1}^{i_0}(x_i-\sigma_i)\\
\;+\sum_{j=1}^{i_0-1}(\sigma_j-\sigma_{j+1})\sum_{i=1}^{j}(x_i-\sigma_i).
\end{array}
$$
The property $\sigma\prec_w x$ implies $\sum_{i=1}^{j}(x_i-\sigma_i)\ge 0$ for all $j\in [i_0]$. The terms $(\sigma_j-\sigma_{j+1})$ are non-negative by assumption. Finally, the assumption \eqref{eq:i_0} ensures that $\sigma_{i_0}-\alpha>0$. Thus all terms in the last display are non-negative, so that for all $x\in B$, 
$$
\langle \nabla h(\sigma),x-\sigma\rangle\ge 0.
$$
We thus have, since  $\sigma\prec_w s$, that $h(s)\ge h(\sigma)$. However, \eqref{eq:step_1} ensures that $\dE\|S\|_F^2\ge h(s)$. 

By expliciting $h(\sigma)$, we get the lower bound:
\begin{equation}
\label{eq:lower_bound}
    \dE \|S\|_F^2 \ge \sum_{i=1}^{i_0}\sigma_i^2 + \frac{1}{r-i_0}\left[\sum_{j=i_0+1}^q \sigma_j\right]^2.
\end{equation}

We now describe the construction of the random matrix $S$ that achieves equality in this lower bound. Recall that the SVD of $M$ be given by
$$
M=\sum_{i=1}^q \sigma_i u_i v_i^\top.
$$
Define for $i=i_0+1,\ldots, q$ the quantity
$$
p_i:= (r-i_0) \frac{\sigma_i}{\sum_{j=i_0+1}^q \sigma_j}\cdot
$$
By the second inequality in the definition \eqref{eq:i_0} of $i_0$, it follows that 
$$
p_i \in [0,1], \; i\in \{i_0+1,\ldots,q\}.
$$
Let now $U_1$ be uniformly distributed over $[0,1]$, and let 
$$
U_j=U_1+ j-1, j=2,\ldots, r-i_0.
$$

Finally, we define
$$
\forall i\in \{i_0+1,\ldots, q\}, \;  Z_i=\left\{\begin{array}{ll}1&\hbox{ if }\displaystyle \exists j\in [r-i_0]: U_j\in \left[\sum_{l=i_0+1}^{i-1}p_l, \sum_{l=i_0+1}^i p_l\right],\\
0&\hbox{otherwise}.
\end{array}
\right.
$$
By construction, we have $\sum_{i=i_0+1}^q p_i=r-i_0$, so that each $Z_i$ is Bernoulli with parameter $p_i$, and moreover
$$
\sum_{i=i_0+1}^q Z_i =r-i_0\hbox{ almost surely}.
$$
We then let
$$
S:=\sum_{i=1}^{i_0}\sigma_i u_i v_i^\top + \sum_{i=i_0+1}^q Z_i\frac{\sigma_i}{p_i}u_i v_i^\top.
$$
The fact that $\dE S=M$ follows directly from the fact that the $Z_i$ are Bernoulli with parameter $p_i$. Finally, we have
$$
\begin{array}{ll}
\dE\|S\|_F^2&\displaystyle=\sum_{i=1}^{i_0} \sigma_i^2+\sum_{i=i_0+1}^q\frac{\sigma_i^2}{p_i}\\
&\displaystyle =\sum_{i=1}^{i_0} \sigma_i^2+\sum_{i=i_0+1}^q \sigma_i^2\frac{\sum_{j=i_0+1}^q \sigma_j}{(r-i_0) \sigma_i}\\
&\displaystyle =\sum_{i=1}^{i_0} \sigma_i^2 +\frac{1}{r-i_0}\left[\sum_{j=i_0+1}^q \sigma_j\right]^2
\end{array}
$$

We have thus exhibited a random matrix $S$ of rank at most $r$, satisfying $\mathbb{E}[S]=M$, that achieves the lower bound \eqref{eq:lower_bound}.
Consequently, this bound is tight and characterizes the minimum possible value of $\mathbb{E}\|M-S\|_F^2$ among all unbiased rank-$r$ sketches.

Moreover, for any sketch of the form
\[
S = \sum_{i=1}^q \sigma_i \frac{Z_i}{p_i} u_i v_i^\top,
\qquad
\mathbb{P}(Z_i=1)=p_i,\quad \sum_{i=1}^q Z_i = r \,
\]
we have
\[
\mathbb{E}\|S\|_F^2 = \sum_{i=1}^q \frac{\sigma_i^2}{p_i}.
\]
Therefore, minimizing $\mathbb{E}\|M-S\|_F^2$ over unbiased rank-$r$ sketches is equivalent to solving the convex optimization problem
\[
\min \sum_{i=1}^q \frac{\sigma_i^2}{p_i}
\quad \text{subject to} \quad
\sum_{i=1}^q p_i = r,\quad 0 < p_i \le 1.
\]

The weights $(p_i)$ defined above attain this minimum. Since the objective is convex and the constraints are affine, optimality follows from the Karush–Kuhn–Tucker conditions, which yield the thresholding structure
\[
p_i^\star = \min\!\left(1, \sqrt{\frac{\sigma_i^2}{\lambda}}\right),
\]
for some $\lambda>0$ chosen so that $\sum_i p_i^\star = r$. This exactly corresponds to the definition of $i_0$ and the probabilities constructed above, completing the proof.

\end{proof}

\begin{proposition}[Restatement of \cref{prop:optimal-column-sketch}]
Let $J\in\mathbb{R}^{m\times n}$ and $\Gamma_{\mathcal{B}}$ defined as above (\ie the empirical second moment matrix of the batch $\mathcal{B}$). 
Define the symmetric matrix
\begin{equation}
    \Gamma_{\mathcal{B}}^{1/2} J^\top J \Gamma_{\mathcal{B}}^{1/2} \;=\; U\,\Sigma\,U^\top,
\end{equation}
where $U$ is orthogonal and $\Sigma=\mathrm{diag}(\sigma^2_1,\ldots,\sigma^2_n)$ with $\sigma_1\ge \cdots \ge \sigma_n\ge 0$.
Among all random matrices $R$ with $\mathbb{E}[R]=I_n$ and
whose 
rank is bounded by $r$,
a minimizer of the $\mathsf{L}^2$ cost \Cref{eq:l2_cost} is attained by sketches
that are diagonal in the eigenbasis of $\Gamma_{\mathcal{B}}^{1/2}J^\top J\Gamma_{\mathcal{B}}^{1/2}$:
\begin{equation}
    \begin{aligned}
        R^* &= \Gamma_{\mathcal{B}}^{1/2}\,U\,B\,U^\top\,\Gamma_{\mathcal{B}}^{-1/2} \\
        B &= \mathrm{diag}(\frac{z_1}{p^*_1},\dots,\frac{z_n}{p^*_n}), \text{ with } z_i \sim \mathcal{B}(p^*_i)
    \end{aligned}
\end{equation}
with $\mathcal{B}(p)$ being the Bernoulli distribution of parameter $p$, and with selection probabilities $\{p^*_i\}_{i=1}^n$ minimizing $\sum_{i=1}^n \sigma^2_i/p_i$ under the constraint $\sum_i p_i=r$.
\end{proposition}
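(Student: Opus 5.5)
We reduce the statement to \cref{thm:svd_sketch} through the change of variables $S = J R\,\Gamma_{\mathcal B}^{1/2}$ suggested in the sketch of proof in the main text. First, by \cref{lem:linear_distortion} the cost is $\mathcal L(R)=\mathbb E\,\Tr\big(J(I-R)\Gamma_{\mathcal B}(I-R)^\top J^\top\big)$. Expanding and using $\mathbb E[R]=I_n$, the cross terms give $-2\Tr(J\Gamma_{\mathcal B}J^\top)$, so
\begin{equation*}
\mathcal L(R)=\mathbb E\,\|JR\Gamma_{\mathcal B}^{1/2}\|_F^2-\|J\Gamma_{\mathcal B}^{1/2}\|_F^2 .
\end{equation*}
The second term does not depend on $R$, so we only need to minimize $\mathbb E\|S\|_F^2$. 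Here $S:=JR\Gamma_{\mathcal B}^{1/2}$ satisfies $\mathbb E[S]=M:=J\Gamma_{\mathcal B}^{1/2}$ and $\operatorname{rank}S\le\operatorname{rank}R\le r$. Since $\mathbb E\|S-M\|_F^2=\mathbb E\|S\|_F^2-\|M\|_F^2$, the lower bound \eqref{eq:lower_bound} of \cref{thm:svd_sketch} applies to every admissible $R$. It is expressed through the singular values of $M$, which are the $\sigma_i$ of the statement: $M^\top M=\Gamma_{\mathcal B}^{1/2}J^\top J\Gamma_{\mathcal B}^{1/2}=U\Sigma U^\top$.

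Next we show that $R^*$ attains this bound. Write the SVD $M=\sum_i\sigma_i u'_i v_i^\top$, where $v_i$ is the $i$-th column of $U$ and $u'_i=Mv_i/\sigma_i$ when $\sigma_i>0$. Then
\begin{equation*}
JR^*\Gamma_{\mathcal B}^{1/2}=J\Gamma_{\mathcal B}^{1/2}U B U^\top\Gamma_{\mathcal B}^{-1/2}\Gamma_{\mathcal B}^{1/2}=MUBU^\top=\sum_i\sigma_i\frac{z_i}{p_i^*}u'_iv_i^\top .
\end{equation*}
This is exactly the optimal sketch of \cref{thm:svd_sketch}, provided the $z_i$ are drawn with the correlated fixed-sum scheme from its proof. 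Consequently $R^*$ is unbiased, its rank equals the number of nonzero $z_i$, which is at most $r$, and it achieves the minimal value, with $p^*$ solving $\min\sum_i\sigma_i^2/p_i$ subject to $\sum_i p_i=r$. Indices with $\sigma_i=0$, and the fact that $n$ may exceed $q$, contribute nothing to the cost; we give them $p_i=0$ (or treat them as the lemma's zero singular values), so they use no rank budget.

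The main obstacle is that $\Gamma_{\mathcal B}^{-1/2}$ need not exist: $\Gamma_{\mathcal B}=\frac1B GG^\top$ has rank at most $B$. We first handle invertible $\Gamma_{\mathcal B}$ as above. In the singular case we would replace $\Gamma_{\mathcal B}^{-1/2}$ by the pseudo-inverse and add the identity on $\ker\Gamma_{\mathcal B}$, i.e. use
\begin{equation*}
R^*=\Gamma_{\mathcal B}^{1/2}UBU^\top\Gamma_{\mathcal B}^{+1/2}+P_{\ker}.
\end{equation*}
This keeps $\mathbb E[R^*]=I_n$, since $\Gamma_{\mathcal B}^{1/2}\Gamma_{\mathcal B}^{+1/2}$ is the projector onto the range. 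It leaves $JR^*\Gamma_{\mathcal B}^{1/2}$ unchanged, so the cost is unchanged. Its rank is inflated only on directions invisible to the cost; one would either accept rank counted on $\operatorname{range}\Gamma_{\mathcal B}$ or use a perturbation $\Gamma_{\mathcal B}+\epsilon I$ with a limiting argument. A second subtlety is that the unbiasedness and Frobenius identities from \cref{thm:svd_sketch} must be checked conditionally on $\mathcal F$, which is immediate because $J$ and $\Gamma_{\mathcal B}$ are $\mathcal F$-measurable.
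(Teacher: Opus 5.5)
Your proposal is correct and follows the same route as the paper: you substitute $S=JR\Gamma_{\mathcal B}^{1/2}$ with $\mathbb{E}[S]=M=J\Gamma_{\mathcal B}^{1/2}$, appeal to \cref{thm:svd_sketch}, and map the optimal $S$ back to $R^*$. Your direct check $JR^*\Gamma_{\mathcal B}^{1/2}=MUBU^\top$ is cleaner than the paper's identification step, and you rightly insist on the fixed-sum correlated $z_i$. On the singular case, the paper simply asserts that $\Gamma_{\mathcal B}$ is positive definite, and neither of your remedies can do better: the $P_{\ker}$ term breaks the rank bound, and a perturbation argument only yields the infimum, which need not be attained (for $J=I_2$, $\Gamma_{\mathcal B}=\mathrm{diag}(1,0)$, $r=1$, zero cost forces $Re_1=e_1$ and then $R_{22}=0$ almost surely, contradicting $\mathbb{E}[R]=I_2$, yet cost $0$ can be approached), so invertibility of $\Gamma_{\mathcal B}$ should just be stated as a hypothesis.
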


\begin{proof}
    We want to minimize the distortion
    \begin{equation}
        \mathcal{L}(R) = \frac{1}{B} \, \sum_{b=1}^B \,  \mathbb{E}[\|J_{ij}^{(b)} \widehat g_j^{(b)} - J_{ij}^{(b)} R \, \widehat g_j^{(b)} \|^2],
    \end{equation}
    as defined in \cref{eq:l2_cost}.
    As we stated in the main text, via the sketch of proof, the use of \cref{lem:linear_distortion} for linear settings and the constraint $\mathbb{E}[R] = I_n$ yield
    \begin{equation}
        \mathcal{L}(R) = \mathbb{E}\left[ \Tr \big(J R\Gamma_{\mathcal B}R^\top J^\top\big)\right].
    \end{equation}
    Leveraging the fact that, as a second moment matrix $\Gamma_\mathcal{B} := \frac{1}{B}GG^\top$ is positive definite, so $\Gamma_\mathcal{B}^{1/2}$ exists and we write $S=JR \Gamma_\mathcal{B}^{1/2}$.
    We then have
    \begin{equation}
        \mathcal{L}(R) = \mathbb{E}[ \Tr (SS^T)] = \mathbb{E}[\|S\|^2].
    \end{equation}
    Since we look at sketches $R$ such as $\mathrm{rank}(R) \le r$, then $S$ is of rank at most $r$ as well.
    Also, since $\mathbb{E}[R] = I_n$, we define $M := \mathbb{E}[S] = J \Gamma^{1/2}$, and the minimization of $\mathcal{L}$ is exactly the problem stated in \cref{prop:optimal-column-sketch}.
    Applying \cref{thm:svd_sketch} to $M$ we get that an optimal rank-$r$ unbiased sketch $S$ is obtained by sampling in its singular vector basis.
    Let $M = P\,\Sigma^{1/2}\,U^\top$ be an SVD of $M$, equivalently
    \[
        M^\top M = U\,\Sigma\,U^\top
        \qquad\text{with}\qquad
        M^\top M = \Gamma_{\mathcal B}^{1/2} J^\top J \Gamma_{\mathcal B}^{1/2},
    \]
    which matches \eqref{eq:basis_svd}. Therefore, an optimal choice is
    \begin{equation}
        S^\star \;=\; P\,\Sigma^{1/2}\,U^\top\,B
        \qquad\text{with}\qquad
        B=\mathrm{diag}\!\left(\frac{z_1}{p_1^\star},\ldots,\frac{z_n}{p_n^\star}\right),
        \ \ z_i\sim\mathcal{B}(p_i^\star),
        \ \ \sum_{i=1}^n z_i = r \,
        \label{eq:Sstar-form}
    \end{equation}
    where the probabilities $p_i^\star$ solve the convex program stated in \cref{prop:optimal-column-sketch}.
    
    Finally, since $S^\star = J R^\star \Gamma_{\mathcal B}^{1/2}$ and
    $M = J\Gamma_{\mathcal B}^{1/2} = P\Sigma^{1/2}U^\top$, we have
    \begin{equation}
    S^\star = M B = J\Gamma_{\mathcal B}^{1/2} B.
    \end{equation}
    Identifying both expressions yields a choice for $R^\star$
    \begin{equation}
    R^\star \Gamma_{\mathcal B}^{1/2} = \Gamma_{\mathcal B}^{1/2} B,
    \end{equation}
    hence
    \begin{equation}
    R^\star = \Gamma_{\mathcal B}^{1/2} B \Gamma_{\mathcal B}^{-1/2}.
    \end{equation}
    Writing $B$ in the eigenbasis of $\Gamma_{\mathcal B}^{1/2}J^\top J\Gamma_{\mathcal B}^{1/2}$ finally gives
    \begin{equation}
    R^\star
    =
    \Gamma_{\mathcal B}^{1/2}
    U\,\mathrm{diag}\!\left(\frac{z_1}{p_1^\star},\ldots,\frac{z_n}{p_n^\star}\right)
    U^\top
    \Gamma_{\mathcal B}^{-1/2},
    \end{equation}
    which is exactly the form \eqref{eq:opt-R-form}.
    
\end{proof}

\begin{lemma}[Restatement of \cref{cor:diag-mask}]
    We now restrict $R$ to be diagonal, $R=\mathrm{diag}(r_1,\ldots,r_n)$ with
    \[
    r_i \;=\; z_i/p_i, \quad
    z_i \sim \mathcal{B}(p_i)\, \quad \text{independent,}\quad
    p_i \in ]0,1],
    \]
    so that $\mathbb{E}[R]=I_n$.  For such diagonal mask, under the expected rank constraint $\sum_{i=1}^n p_i\le r$, 
    the minimal $\mathsf{L}^2$ distortion \Cref{eq:l2_cost} is obtained be choosing probabilities $p_i$ that minimize 
    $
    \inf_{p}\sum_{i=1}^n \frac{a_i}{p_i} $ under the constraint $\sum_i p_i\le r$, where 
    $
    a_i \;:=\; (\Gamma_{\mathcal{B}})_{ii}\,(J^\top J)_{ii}$.
\end{lemma}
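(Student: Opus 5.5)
We start from \cref{lem:linear_distortion}, which writes the distortion as
$$\mathcal{L}(R)=\mathbb{E}\,\Tr\big(J^\top J (I-R)\Gamma_{\mathcal B}(I-R)^\top\big).$$
Set $D:=R-I=\mathrm{diag}(d_1,\ldots,d_n)$ with $d_i=z_i/p_i-1$. The $d_i$ are independent and centered, and $\mathbb{E}[d_i^2]=\frac{1}{p_i}-1$. Write $A:=J^\top J$ and $\Gamma:=\Gamma_{\mathcal B}$. The objective is then $\mathbb{E}\,\Tr(A D\Gamma D)=\sum_{k,l}A_{lk}\Gamma_{kl}\,\mathbb{E}[d_k d_l]$, since $D$ is symmetric.

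The key step is to compute $\mathbb{E}[d_kd_l]$. By independence and zero mean, $\mathbb{E}[d_kd_l]=0$ whenever $k\ne l$. Only the diagonal survives:
$$\mathcal{L}(R)=\sum_{i=1}^n A_{ii}\Gamma_{ii}\Big(\frac1{p_i}-1\Big)=\sum_{i=1}^n\frac{a_i}{p_i}-\sum_{i=1}^n a_i,$$
where $a_i=(\Gamma_{\mathcal B})_{ii}(J^\top J)_{ii}\ge 0$. The second sum does not depend on $p$. Minimizing $\mathcal{L}$ over admissible independent diagonal masks with $\sum_i p_i\le r$ is therefore exactly minimizing $\sum_i a_i/p_i$ under that constraint, which is the claim.

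To complete the statement, we note that the reduced program is convex: each $a_i/p_i$ is convex on $(0,1]$ and the constraints are affine. A minimizer exists, taking the convention that $p_i$ may be taken arbitrarily small when $a_i=0$, in which case the infimum formulation of the statement handles it. The KKT conditions give the same thresholding form as in the proof of \cref{thm:svd_sketch}, namely $p_i^\star=\min(1,\sqrt{a_i/\lambda})$, with the budget constraint active whenever $r<n$ and some $a_i>0$.

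The only delicate point is the vanishing of the cross terms. It relies on the conditioning convention of the footnote to \eqref{eq:l2_cost}: $\Gamma_{\mathcal B}$ and $J$ must be $\mathcal F$-measurable, hence independent of the mask, so that the expectation factors through the mask entries. This is precisely where independence of the $z_i$ is used. With correlated Bernoullis the off-diagonal covariances would not vanish, and this is why the constraint is imposed in expectation rather than almost surely.
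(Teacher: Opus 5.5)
Your proof is correct and follows the paper's argument. Both expand the trace entrywise, kill the cross terms using independence and $\mathbb{E}[1-r_i]=0$ (with the mask independent of $G$ and $J$), use $\mathbb{E}[(1-r_i)^2]=1/p_i-1$, and drop the $p$-independent term $\sum_i a_i$. The KKT aside is not needed, and the budget is only forced to bind when more than $r$ of the $a_i$ are positive, but this does not affect the main argument.
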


\begin{proof}
With $R=\mathrm{diag}(r_1,\dots,r_n)$ (hence $(I - R)^\top=(I - R)$), expand
\begin{equation}
   \begin{aligned}
       \Tr\!\big(J^\top J\,(I - R) \Gamma_{\mathcal{B}} (I - R) \big) &= \sum_{i,j=1}^n (J^\top J)_{ij}\, (1 - r_i) (1 - r_j) (\Gamma_{\mathcal{B}})_{ij} \\
       &= \sum_{i=1}^n (J^\top J)_{ii}\, (1-r_i)^2  (\Gamma_{\mathcal{B}})_{ii} + \sum_{i\neq j} (J^\top J)_{ij}\, (1-r_i) (1-r_j) (\Gamma_{\mathcal{B}})_{ij}.
   \end{aligned}
\end{equation}

Under the diagonal mask model $r_i=z_i/p_i$ with $z_i\sim \mathcal{B} (p_i)$ independent of $G$, we have:
\begin{equation}
   \begin{aligned}
       \mathbb{E}[1 - r_i] &=0, \\
       \mathbb{E}[(1 - r_i) (1 - r_j)] &=\mathbb{E}[(1 - r_i)]\mathbb{E}[(1 - r_j)]=0\ (i\neq j), \\
       \mathbb{E}[(1 - r_i)^2] &=\frac{1}{p_i} - 1\quad(p_i>0).
   \end{aligned}    
\end{equation}

Taking expectations yields
\begin{equation}
   \mathbb{E}\!\left[\Tr\!\big(J^\top J\,(I-R) \Gamma_{\mathcal{B}} (I-R)\big)\right] = \sum_{i=1}^n (J^\top J)_{ii}\, \big(\frac{(\Gamma_{\mathcal{B}})_{ii}}{p_i} - (\Gamma_{\mathcal{B}})_{ii} \big).
\end{equation}

Since we aim at minimizing with $p_i$ as the variable, we get rid of the second term of the difference, hence the minimization objective

\begin{equation}
   \sum_{i=1}^n \frac{a_i}{p_i}, \quad \text{with } a_i \;:=\; (\Gamma_{\mathcal{B}})_{ii}\,(J^\top J)_{ii}.
\end{equation}

Therefore, minimizing $\mathcal{L}$ over diagonal masks with budget
$\sum_{i=1}^n p_i\le r$ is equivalent to
\begin{equation}
   \min_{\,p\in[0,1]^n,\ \sum_i p_i\le r}\ \sum_{i=1}^n \frac{a_i}{p_i}.
\end{equation}

This matches the constraints stated in Proposition~\ref{prop:optimal-column-sketch} and concludes the proof.
\end{proof}

\newpage

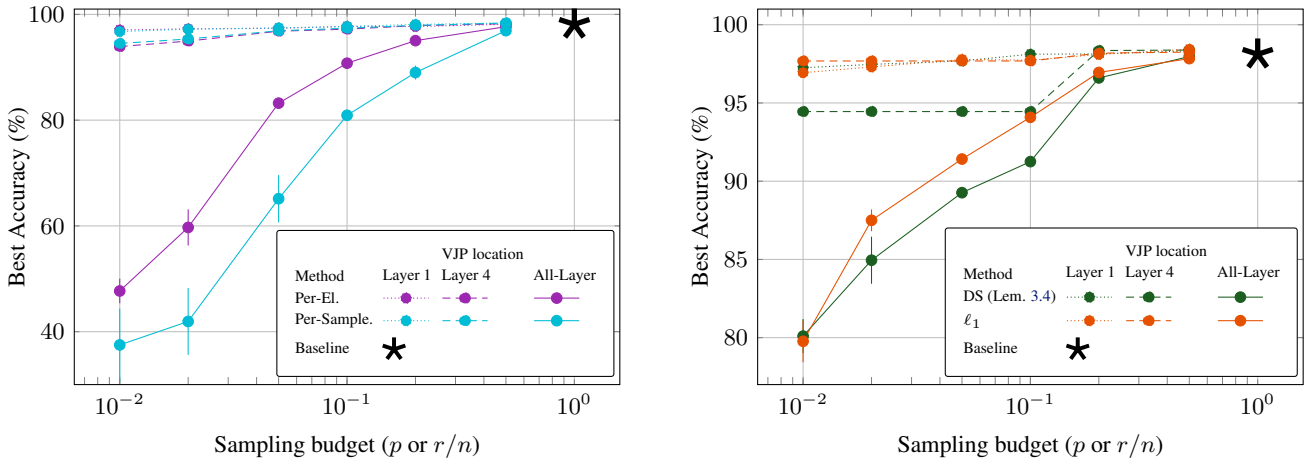
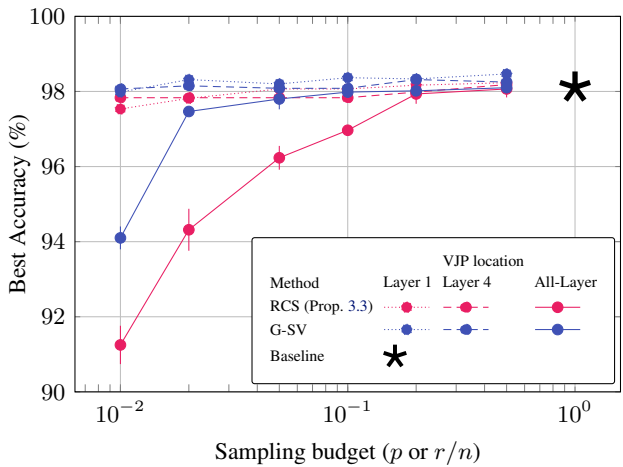
\begin{figure*}[t]
    \centering

    \begin{subfigure}[t]{0.47\textwidth}
        \centering
        \input{figures_tex/variance_prop_masks}
        \caption{Masking Methods}
        \label{fig:variance_masking}
    \end{subfigure}
    \hfill
    \begin{subfigure}[t]{0.47\textwidth}
        \centering
        \input{figures_tex/variance_prop_coord}
        \caption{Coordinate-based Sketching Methods}
        \label{fig:variance_coordinate}
    \end{subfigure}

    \vspace{1em}

    \begin{subfigure}[t]{0.47\textwidth}
        \centering
        \input{figures_tex/variance_prop_spectral}
        \caption{Spectral-based Sketching Methods}
        \label{fig:variance_spectral}
    \end{subfigure}

    \caption{Impact of VJP Approximation Location in MLPs.}
    \label{fig:variance_location}
\end{figure*}
\section{Additional Numerical Details}

\subsection{Additional Variance Location Result on MLP}

In the main body of the paper, we exclusively considered experiments in which our methods are implemented for all layers of the MLP.
In this appendix, we complement these results by comparing full-layer VJP approximation with variants, where it is performed either only to the first layer or only to the last layer.
Results are reported in \cref{fig:variance_location}.
As one might intuitively expect, approximating local VJPs solely in the last layer leads to a more pronounced degradation in accuracy than when noise is injectes only in the first layer.
Beyond this empirical observation, these results may provide practical insights for distributed training settings and especially, in straggler-mitigation scenarios where one may wish to apply VJP approximations selectively at a slower compute node rather than uniformly across the network.

\subsection{Details on ViT and BagNet}

We provide here more details about the training of the larger architectures we used in \cref{sec:numerical_analysis} to perform experiments on the CIFAR-10 \citep{cifar10} dataset.
For both models, the data was processed through the same simple augmentation process consisting in random $32 \times 32$ cropping and horizontal flipping.
Both training used a batch-size of 128.

\paragraph{BagNet-17.}
The first architecture we looked at was BagNet-17 \citep{brendel2019approximating}.
To establish a baseline, we used an SGD optimizer, with a momentum of 0.9, a learning rate of $10^{-1.5} \approx 0.032$, and a weight decay of $10^{-3}$.
We also used a cosine scheduler, that reduces the learning rate to $10^{-5}$ over the course of the training.
As mentioned in the main article, VJP approximation methods were employed on every linear (or equivalent, like $1 \times 1$ convolutions) layer, except the final classification layer and the initial input projection.

\paragraph{Visual Transormer (ViT).}
We also trained some ViT architecture \cite{dosovitskiy2021imageworth16x16words}.
Regarding the architecture details, we used an embedding dimension of 192, a MLP size of 1024, a network depth of 9, and 12 attention heads.
Images were processed with a patch size of 4.
Finally, to establish a baseline, we trained our model using AdamW \citep{loshchilov2019decoupledweightdecayregularization} optimizer, with a learning rate of $3 \times 10^{-4}$ and a weight decay of $0.05$.
We also used a dropout parameter of 0.1, and a cosine scheduler with a $10$ epochs warmup.

\clearpage
\newpage

\section{Pseudo-code}

\subsection{Practical considerations}

We provide here the pseudocode for the different masking procedures implemented and discussed in the main paper.
Note that, in most practical deep learning frameworks, operations are performed on tensors that represent batches of data.
Moreover, in these frameworks, row vectors are typically used, in contrast to the usual theoretical setup that considers column vectors.

For example, in the usual mathematical setup, a linear layer is written as:
\begin{equation}
    y = Wx + b,
\end{equation}
with $x \in \mathbb{R}^{d_{\mathrm{in}}}$, $W \in \mathbb{R}^{d_{\mathrm{out}} \times d_{\mathrm{in}}}$, $b \in \mathbb{R}^{d_{\mathrm{out}}}$, and $y \in \mathbb{R}^{d_{\mathrm{out}}}$, where $d_{\mathrm{in}}$ and $d_{\mathrm{out}}$ denote the input and output dimensions of the layer, respectively.

In the practical setup, however, the linear layer is implemented as:
\begin{equation}
    y = x W^\top + b,
\end{equation}
with $x \in \mathbb{R}^{B \times d_{\mathrm{in}}}$, $W \in \mathbb{R}^{d_{\mathrm{out}} \times d_{\mathrm{in}}}$, $b \in \mathbb{R}^{d_{\mathrm{out}}}$ (broadcasted across the batch dimension), and $y \in \mathbb{R}^{B \times d_{\mathrm{out}}}$, where $B$ denotes the batch size.

\subsection{Utils}

This subsection shows code to find optimal probabilities $p_i^\star$ for data-dependent sketches (\cref{subsec:sketching_operators}), by solving the convex program stated in \eqref{eq:convex_program} as well as the algorithm that performs \emph{exact}-r Bernoulli correlated sampling, as described in \cref{prop:optimal-column-sketch}.

\begin{algorithm}[h]
\caption{Convex program solving for optimal probabilities \eqref{eq:convex_program}}
\label{alg:convex_program}
\begin{algorithmic}[1]
\REQUIRE Importance weights $w_i > 0$
\ENSURE Optimal sampling probabilities $p_i^\star \in (0, 1]$ with $\sum_{i=1}^n p_i^\star \le r$.

\STATE $t_i \leftarrow \sqrt{w_i}$ for all $i$
\STATE Sort $t$ in decreasing order: $t_{(1)} \ge \cdots \ge t_{(n)}$
\STATE Compute suffix sums $S_k \leftarrow \sum_{i=k}^n t_{(i)}$

\FOR{$k = 0,\dots,n-1$}
    \STATE $\text{remainder} \leftarrow r - k$
    \IF{$\text{remainder} \le 0$}
        \STATE \textbf{break}
    \ENDIF
    \STATE $\sqrt{\lambda} \leftarrow S_{k+1} / \text{remainder}$
    \IF{$k = 0$ \OR $t_{(k)} \ge \sqrt{\lambda}$}
        \IF{$t_{(k+1)} \le \sqrt{\lambda}$}
            \STATE $k^\star \leftarrow k$
            \STATE \textbf{break}
        \ENDIF
    \ENDIF
\ENDFOR

\FOR{$i = 1,\dots,n$}
    \STATE $p^\star_{(i)} \leftarrow \min\!\left(1,\; t_{(i)} / \sqrt{\lambda}\right)$
\ENDFOR

\STATE Undo the sorting permutation
\STATE \textbf{return} $p^\star$

\end{algorithmic}
\end{algorithm}

\begin{algorithm}[H]
\caption{Correlated exact-$r$ sampling}
\label{alg:correlated_sampling}
\begin{algorithmic}[1]
\REQUIRE Probabilities $p \in (0,1]^n$ with $\sum_{i=1}^n p_i = r$
\ENSURE Exactly $r$ distinct sampled indices

\STATE Compute cumulative sums:
\STATE \hspace{1em} $P_j \leftarrow \sum_{i=1}^j p_i$ for $j=1,\dots,n$
\STATE $P_n \leftarrow r$ \COMMENT{numerical safety}

\STATE Sample $u \sim \mathrm{Uniform}(0,1]$

\FOR{$\ell = 0,\dots,r-1$}
    \STATE $t \leftarrow u + \ell$
    \STATE Find smallest $j$ such that $P_j \ge t$
    \STATE Select index $j$
\ENDFOR

\STATE \textbf{return} the $r$ selected indices
\end{algorithmic}
\end{algorithm}

\subsection{Masking Methods}

This subsection contains pseudocode for the three masking methods presented in \cref{subsec:masking_operators}.

\begin{algorithm}[h!]
\caption{Per-Element Masking Backward Pass in a Linear Layer}
\label{alg:per_el_masking}
\begin{algorithmic}[1]
\REQUIRE Output gradient $\mathbf{G} \triangleq \frac{\partial \mathcal{L}}{\partial \mathbf{Y}} \in \mathbb{R}^{B \times d_{\text{out}}}$
\REQUIRE Cached tensors $\mathbf{X} \in \mathbb{R}^{B \times d_{\text{in}}}$, $\mathbf{W} \in \mathbb{R}^{d_{\text{out}} \times d_{\text{in}}}$, $\mathbf{b} \in \mathbb{R}^{d_{\text{out}}}$
\REQUIRE Sampling probability $p \in (0,1]$
\ENSURE Gradients $\frac{\partial \mathcal{L}}{\partial \mathbf{X}}$, $\frac{\partial \mathcal{L}}{\partial \mathbf{W}}$, $\frac{\partial \mathcal{L}}{\partial \mathbf{b}}$

\vspace{0.25em}
\STATE \textbf{// Generate element-wise masks}
\STATE $\mathbf{M}_W \sim \mathrm{Bernoulli}(p) \in \{0,1\}^{d_{\text{out}} \times d_{\text{in}}}$ \hfill (weight mask)
\STATE $\mathbf{M}_X \sim \mathrm{Bernoulli}(p) \in \{0,1\}^{B \times d_{\text{in}}}$ \hfill (input mask)

\vspace{0.25em}
\STATE \textbf{// Gradient with respect to the input}
\STATE $\hat{\mathbf{W}} \gets \mathbf{W} \odot \mathbf{M}_W$
\STATE $\frac{\partial \mathcal{L}}{\partial \mathbf{X}} \gets \frac{1}{p}\, \mathbf{G}\hat{\mathbf{W}}$

\vspace{0.25em}
\STATE \textbf{// Gradient with respect to the weights}
\STATE $\hat{\mathbf{X}} \gets \mathbf{X} \odot \mathbf{M}_X$
\STATE $\frac{\partial \mathcal{L}}{\partial \mathbf{W}} \gets \frac{1}{p}\, \mathbf{G}^\top \hat{\mathbf{X}}$

\vspace{0.25em}
\STATE \textbf{// Gradient with respect to the bias}
\STATE $\frac{\partial \mathcal{L}}{\partial \mathbf{b}} \gets \sum_{i=1}^{B} \mathbf{G}[i,:]$

\STATE \textbf{return} $\left(\frac{\partial \mathcal{L}}{\partial \mathbf{X}},\frac{\partial \mathcal{L}}{\partial \mathbf{W}},\frac{\partial \mathcal{L}}{\partial \mathbf{b}}\right)$
\end{algorithmic}
\end{algorithm}

\begin{algorithm}[h!]
\caption{Per-Sample Masking Backward Pass in a Linear Layer}
\label{alg:per_sample_masking}
\begin{algorithmic}[1]
\REQUIRE Output gradient $\mathbf{G} \triangleq \frac{\partial \mathcal{L}}{\partial \mathbf{Y}} \in \mathbb{R}^{B \times d_{\text{out}}}$
\REQUIRE Cached tensors $\mathbf{X} \in \mathbb{R}^{B \times d_{\text{in}}}$, $\mathbf{W} \in \mathbb{R}^{d_{\text{out}} \times d_{\text{in}}}$, $\mathbf{b} \in \mathbb{R}^{d_{\text{out}}}$
\REQUIRE Sampling probability $p \in (0,1]$
\ENSURE Gradients $\frac{\partial \mathcal{L}}{\partial \mathbf{X}}$, $\frac{\partial \mathcal{L}}{\partial \mathbf{W}}$, $\frac{\partial \mathcal{L}}{\partial \mathbf{b}}$

\vspace{0.25em}
\STATE \textbf{// Generate row-wise mask (sample-level masking)}
\STATE $\mathbf{m} \sim \mathrm{Bernoulli}(p) \in \{0,1\}^{B}$
\STATE $\mathbf{M}_G \gets \mathbf{m}\,\mathbf{1}_{d_{\text{out}}}^\top \in \{0,1\}^{B \times d_{\text{out}}}$

\vspace{0.25em}
\STATE \textbf{// Apply mask and rescale (unbiased estimator)}
\STATE $\hat{\mathbf{G}} \gets \frac{1}{p}\, (\mathbf{G} \odot \mathbf{M}_G)$

\vspace{0.25em}
\STATE \textbf{// Backward pass with masked gradient}
\STATE $\frac{\partial \mathcal{L}}{\partial \mathbf{X}} \gets \hat{\mathbf{G}}\,\mathbf{W}$
\STATE $\frac{\partial \mathcal{L}}{\partial \mathbf{W}} \gets \hat{\mathbf{G}}^\top \mathbf{X}$
\STATE $\frac{\partial \mathcal{L}}{\partial \mathbf{b}} \gets \sum_{i=1}^{B} \hat{\mathbf{G}}[i,:]$

\STATE \textbf{return} $\left(\frac{\partial \mathcal{L}}{\partial \mathbf{X}},\frac{\partial \mathcal{L}}{\partial \mathbf{W}},\frac{\partial \mathcal{L}}{\partial \mathbf{b}}\right)$
\end{algorithmic}
\end{algorithm}

\begin{algorithm}[h!]
\caption{Per-Column Masking Backward Pass in a Linear Layer}
\label{alg:column_wise_backward}
\begin{algorithmic}[1]
\REQUIRE Output gradient $\mathbf{G} \triangleq \frac{\partial \mathcal{L}}{\partial \mathbf{Y}} \in \mathbb{R}^{B \times d_{\text{out}}}$
\REQUIRE Cached tensors $\mathbf{X} \in \mathbb{R}^{B \times d_{\text{in}}}$, $\mathbf{W} \in \mathbb{R}^{d_{\text{out}} \times d_{\text{in}}}$, $\mathbf{b} \in \mathbb{R}^{d_{\text{out}}}$
\REQUIRE Sparsity probability $p \in (0,1]$
\ENSURE Gradients $\frac{\partial \mathcal{L}}{\partial \mathbf{X}}$, $\frac{\partial \mathcal{L}}{\partial \mathbf{W}}$, $\frac{\partial \mathcal{L}}{\partial \mathbf{b}}$

\vspace{0.25em}
\STATE \textbf{// Generate column-wise mask}
\STATE $\mathbf{m} \sim \mathrm{Bernoulli}(p) \in \{0,1\}^{d_{\text{out}}}$
\STATE $\mathbf{M}_G \gets \mathbf{1}_B \mathbf{m}^\top \in \{0,1\}^{B \times d_{\text{out}}}$

\vspace{0.25em}
\STATE \textbf{// Apply mask and rescale (unbiased estimator)}
\STATE $\hat{\mathbf{G}} \gets \frac{1}{p}\, (\mathbf{G} \odot \mathbf{M}_G)$

\vspace{0.25em}
\STATE \textbf{// Backward pass with masked gradient}
\STATE $\frac{\partial \mathcal{L}}{\partial \mathbf{X}} \gets \hat{\mathbf{G}}\,\mathbf{W}$
\STATE $\frac{\partial \mathcal{L}}{\partial \mathbf{W}} \gets \hat{\mathbf{G}}^\top \mathbf{X}$
\STATE $\frac{\partial \mathcal{L}}{\partial \mathbf{b}} \gets \sum_{i=1}^{B} \hat{\mathbf{G}}[i,:]$

\STATE \textbf{return} $\left(\frac{\partial \mathcal{L}}{\partial \mathbf{X}},\frac{\partial \mathcal{L}}{\partial \mathbf{W}},\frac{\partial \mathcal{L}}{\partial \mathbf{b}}\right)$
\end{algorithmic}
\end{algorithm}

Technical note regardining \cref{alg:per_sample_masking}: The mask generated is column-wise on the gradients, in a \emph{practical} implementation, where notations are transposed compared to usual theoretical frameworks.
Hence the mask zeroes-out \emph{rows} of the gradients in mathematical notation...
Which is equivalent to zeroing columns of the Jacobian.

\subsection{Sketching Methods}

We now give the general sketching algorithm. Note that weight can be computed in various different ways.

This algorithm (\cref{alg:coordinate_sketch}) can be used for all sketching methods that operate in the canonical basis (\eg $\ell_1$, $\ell_2$, etc.), score computation just needs to be adapted.

\begin{algorithm}[h!]
\caption{Coordinate Sketch Backward Pass for a Linear Layer (Example with L1 Proxy)}
\label{alg:coordinate_sketch}
\begin{algorithmic}[1]
\REQUIRE Gradient \wrt output (matrix) $\mathbf{G} \triangleq \frac{\partial \mathcal{L}}{\partial \mathbf{Y}} \in \mathbb{R}^{B \times d_{\text{out}}}$
\REQUIRE Cached tensors $\mathbf{X} \in \mathbb{R}^{B \times d_{\text{in}}}$, $\mathbf{W} \in \mathbb{R}^{d_{\text{out}} \times d_{\text{in}}}$, (optional) $\mathbf{b} \in \mathbb{R}^{d_{\text{out}}}$
\REQUIRE Target number of selected columns $r$
\REQUIRE Subroutines: $\textsc{PStarFromScores}(\cdot)$ (Alg.~\ref{alg:convex_program}) and $\textsc{CorrelatedExactRSample}(\cdot)$ (Alg.~\ref{alg:correlated_sampling})
\ENSURE Gradients $\frac{\partial \mathcal{L}}{\partial \mathbf{X}}$, $\frac{\partial \mathcal{L}}{\partial \mathbf{W}}$, $\frac{\partial \mathcal{L}}{\partial \mathbf{b}}$

\vspace{0.25em}
\STATE \textbf{// Compute L1 based importance weights per output column}
\FOR{$j = 1,\dots,d_{\text{out}}$}
    \STATE $s_j \gets \left\|\mathbf{G}[:,j]\right\|_1^2$
\ENDFOR
\STATE $\mathbf{s} \gets (s_1,\dots,s_{d_{\text{out}}})$

\vspace{0.25em}
\STATE \textbf{// Compute optimal marginal probabilities and sample exactly $r$ columns}
\STATE $\mathbf{p}^\star \gets \textsc{PStarFromWeights}(\mathbf{s}, r)$
\STATE $\mathcal{I} \gets \textsc{CorrelatedExactRSample}(\mathbf{p}^\star, r)$ \hfill ($|\mathcal{I}|=r$)

\vspace{0.25em}
\STATE \textbf{// Form unbiased masked estimator of the output gradient}
\STATE $\mathbf{m} \gets \mathbf{0} \in \{0,1\}^{d_{\text{out}}}$
\FOR{each $j \in \mathcal{I}$}
    \STATE $m_j \gets 1$
\ENDFOR
\STATE $\hat{\mathbf{G}} \gets \mathbf{G} \odot (\mathbf{1}_B \mathbf{m}^\top)$
\FOR{each $j \in \mathcal{I}$}
    \STATE $\hat{\mathbf{G}}[:,j] \gets \hat{\mathbf{G}}[:,j] / p^\star_j$ \hfill // Scaling to ensure unbiasedness
\ENDFOR

\vspace{0.25em}
\STATE \textbf{// Backward pass using the sketched output gradient}
\STATE $\frac{\partial \mathcal{L}}{\partial \mathbf{X}} \gets \hat{\mathbf{G}}\,\mathbf{W}$
\STATE $\frac{\partial \mathcal{L}}{\partial \mathbf{W}} \gets \hat{\mathbf{G}}^\top \mathbf{X}$
\IF{$\mathbf{b}$ is present}
    \STATE $\frac{\partial \mathcal{L}}{\partial \mathbf{b}} \gets \sum_{i=1}^{B} \hat{\mathbf{G}}[i,:]$
\ENDIF

\STATE \textbf{return} $\left(\frac{\partial \mathcal{L}}{\partial \mathbf{X}},\frac{\partial \mathcal{L}}{\partial \mathbf{W}},\frac{\partial \mathcal{L}}{\partial \mathbf{b}}\right)$
\end{algorithmic}
\end{algorithm}

\newpage


\end{document}

%% file: definitions.tex
\usepackage{graphicx} 
\usepackage{amsmath}
\usepackage{amsthm}
\usepackage{amssymb}
\usepackage{xcolor}
\usepackage{booktabs}
\usepackage{tabularx}
\usepackage{multirow}
\usepackage{subcaption}
\usepackage{adjustbox}
\usepackage{arydshln}
\usepackage{pgfplots}
\pgfplotsset{compat=1.18}
\usepackage{tikz}
\usetikzlibrary{matrix}

\newcommand{\killian}[1]{\textcolor{orange}{Killian: #1}}

\newcommand{\dE}{\mathbb{E}}
\newcommand{\Tr}{\hbox{Tr}}
\newcommand{\dR}{\mathbb{R}}
\newcommand{\Var}{\mathrm{Var}}

\newcommand{\ie}{i.e.\ }
\newcommand{\eg}{e.g.\ }
\newcommand{\wrt}{w.r.t.\ }
\newcommand{\st}{\mbox{\ s.t.\ }}

\pgfplotsset{
  every error bar/.append style={solid},
  every error mark/.append style={solid},
}

\definecolor{color1}{RGB}{233,30,99}    
\definecolor{color2}{RGB}{63,81,181}    
\definecolor{color3}{RGB}{0,188,212}    
\definecolor{color4}{RGB}{255,193,7}    
\definecolor{color5}{RGB}{76,175,80}    
\definecolor{color6}{RGB}{156,39,176}   
\definecolor{color7}{RGB}{33,33,33}     

\definecolor{baseline}{RGB}{33,33,33}          

\definecolor{col_ew_sparse}{RGB}{156,39,176}        
\definecolor{col_partial}{RGB}{0,188,212}          
\definecolor{col_uniform}{RGB}{120,144,156}        

\definecolor{col_lone}{RGB}{230,81,0}              
\definecolor{col_ltwo}{RGB}{0,150,136}             
\definecolor{col_var}{RGB}{255,193,7}             
\definecolor{col_lemma}{RGB}{27,94,32}            

\definecolor{col_gsv}{RGB}{63,81,181}              
\definecolor{col_prop}{RGB}{233,30,99}              


%% file: figures_tex/corr_vs_no_corr_2.tex
\begin{tikzpicture}
  \begin{axis}[
    title={},
    title style={font=\small},
    width=0.9\columnwidth,
    height=0.52\columnwidth,
    xlabel={Sampling budget ($p$ or $r/n$)},
    ylabel={Best Accuracy (\%)},
    xlabel style={font=\footnotesize, inner sep=0pt},
    ylabel style={font=\footnotesize, inner sep=0pt},
    xlabel shift=4pt,
    ylabel shift=-4pt,
    grid=major,
    xmode=log,
    ymin=75, ymax=101,
    legend pos = south east,
    legend columns=1,
    legend style={font=\tiny},
    scale only axis,
    tick label style={font=\footnotesize},
    major tick length=2pt,
  ]
   
    \addplot[color=col_lone, mark=*, mark size=1.2pt, densely dashed]
      table [x=p, y=best_val_accuracy_median, col sep=comma]
      {data/mlp_all/topr_l1_squared/all_layers.csv};
      \label{corr_plot:ind_lone}

    \addplot+[
      color=col_lone,
      draw=none,
      mark=none,
      forget plot,
      error bars/.cd, y dir=both, y explicit,
      error mark=none,
    ]
      table [x=p, y=best_val_accuracy_median, y error=std_acc, col sep=comma]
      {data/mlp_all/topr_l1_squared/all_layers.csv};

    \addplot[color=col_lone, mark=*, mark size=1.2pt]
      table [x=p, y=best_val_accuracy_median, col sep=comma]
      {data/mlp_all/corr_topr_l1_squared/all_layers.csv};
      \label{corr_plot:cor_lone}

    \addplot+[
      color=col_lone,
      draw=none,
      mark=none,
      forget plot,
      error bars/.cd, y dir=both, y explicit,
      error mark=none,
    ]
      table [x=p, y=best_val_accuracy_median, y error=std_acc, col sep=comma]
      {data/mlp_all/corr_topr_l1_squared/all_layers.csv};

    \addplot[color=col_lemma, mark=*, mark size=1.2pt, densely dashed]
      table [x=p, y=best_val_accuracy_median, col sep=comma]
      {data/mlp_all/diag_optimal_sec_mom/all_layers.csv};
      \label{corr_plot:ind_diag}

    \addplot+[
      color=col_lemma,
      draw=none,
      mark=none,
      forget plot,
      error bars/.cd, y dir=both, y explicit,
      error mark=none,
    ]
      table [x=p, y=best_val_accuracy_median, y error=std_acc, col sep=comma]
      {data/mlp_all/diag_optimal_sec_mom/all_layers.csv};

    \addplot[color=col_lemma, mark=*, mark size=1.2pt]
      table [x=p, y=best_val_accuracy_median, col sep=comma]
      {data/mlp_all/corr_diag_optimal_sec_mom/all_layers.csv};
      \label{corr_plot:cor_diag}

    \addplot+[
      color=col_lemma,
      draw=none,
      mark=none,
      forget plot,
      error bars/.cd, y dir=both, y explicit,
      error mark=none,
    ]
      table [x=p, y=best_val_accuracy_median, y error=std_acc, col sep=comma]
      {data/mlp_all/corr_diag_optimal_sec_mom/all_layers.csv};

    \addplot[color=col_gsv, mark=*, mark size=1.2pt, densely dashed]
      table [x=p, y=best_val_accuracy_median, col sep=comma]
      {data/mlp_all/svd_topr_squared/all_layers.csv};
      \label{corr_plot:ind_gsv}

    \addplot+[
      color=col_gsv,
      draw=none,
      mark=none,
      forget plot,
      error bars/.cd, y dir=both, y explicit,
      error mark=none,
    ]
      table [x=p, y=best_val_accuracy_median, y error=std_acc, col sep=comma]
      {data/mlp_all/svd_topr_squared/all_layers.csv};

    \addplot[color=col_gsv, mark=*, mark size=1.2pt]
      table [x=p, y=best_val_accuracy_median, col sep=comma]
      {data/mlp_all/corr_svd_topr_squared/all_layers.csv};
      \label{corr_plot:cor_gsv}

    \addplot+[
      color=col_gsv,
      draw=none,
      mark=none,
      forget plot,
      error bars/.cd, y dir=both, y explicit,
      error mark=none,
    ]
      table [x=p, y=best_val_accuracy_median, y error=std_acc, col sep=comma]
      {data/mlp_all/corr_svd_topr_squared/all_layers.csv};

    \addplot[color=col_prop, mark=*, mark size=1.2pt, densely dashed]
      table [x=p, y=best_val_accuracy_median, col sep=comma]
      {data/mlp_all/optimal_sec_mom/all_layers.csv};
      \label{corr_plot:ind_opt}

    \addplot+[
      color=col_prop,
      draw=none,
      mark=none,
      forget plot,
      error bars/.cd, y dir=both, y explicit,
      error mark=none,
    ]
      table [x=p, y=best_val_accuracy_median, y error=std_acc, col sep=comma]
      {data/mlp_all/optimal_sec_mom/all_layers.csv};

    \addplot[color=col_prop, mark=*, mark size=1.2pt]
      table [x=p, y=best_val_accuracy_median, col sep=comma]
      {data/mlp_all/corr_optimal_sec_mom/all_layers.csv};
      \label{corr_plot:cor_opt}

    \addplot+[
      color=col_prop,
      draw=none,
      mark=none,
      forget plot,
      error bars/.cd, y dir=both, y explicit,
      error mark=none,
    ]
      table [x=p, y=best_val_accuracy_median, y error=std_acc, col sep=comma]
      {data/mlp_all/corr_optimal_sec_mom/all_layers.csv};

    \addplot[color=black, mark=star, only marks, mark size=4.5pt,
      mark options={line width=2pt}]
      coordinates {(1, 98.11)};

    \coordinate (legend) at (axis description cs:0.99,0.02);
\end{axis}

\matrix [
          draw,
          rounded corners=1pt,
          fill=white,
          fill opacity=0.55,
          text opacity=1,
          matrix of nodes,
          anchor=south east,
          nodes={font=\tiny},
          column sep=5pt,
          row sep=-4pt,
          cells ={anchor=west}
        ] at (legend) {
                                              & $Corr.$                & $Indep.$ \\
            $\ell_1$                          & \ref{corr_plot:cor_lone}   & \ref{corr_plot:ind_lone}   \\
            RCS (Prop. \ref{prop:optimal-column-sketch}) & \ref{corr_plot:cor_diag}   & \ref{corr_plot:ind_diag}   \\
            G-SV                              & \ref{corr_plot:cor_gsv}   & \ref{corr_plot:ind_gsv}  \\
            DS (Lem. \ref{cor:diag-mask}) & \ref{corr_plot:cor_opt}   & \ref{corr_plot:ind_opt}   \\
            Baseline & \tikz{\draw plot[only marks, mark=star, mark size=4.5pt, mark options={line width=1.5pt}] coordinates {(0,0)};} &  \\
        };

\end{tikzpicture}

%% file: figures_tex/uninformed_vs_score.tex
\begin{tikzpicture}
  \begin{axis}[
    title={},
    title style={font=\small},
    width=0.9\columnwidth,
    height=0.52\columnwidth,
    xlabel={Sampling budget ($p$ or $r/n$)},
    ylabel={Best Accuracy (\%)},
    xlabel style={font=\footnotesize, inner sep=0pt},
    ylabel style={font=\footnotesize, inner sep=0pt},
    xlabel shift=4pt,
    ylabel shift=-4pt,
    grid=major,
    xmode=log,
    ymin=28, ymax=103,
    legend pos = south east,
    legend columns=1,
    legend style={font=\tiny},
    scale only axis,
    tick label style={font=\footnotesize},
    major tick length=2pt,
  ]
    \addplot[color=col_ew_sparse, mark=*, mark size=1.2pt]
      table [x=p, y=best_val_accuracy_median, col sep=comma]
      {data/mlp_all/ew_sparse_jac/all_layers.csv};
      \label{unif_plot:ew}

    \addplot+[
      color=col_ew_sparse,
      draw=none,
      mark=none,
      forget plot,
      error bars/.cd, y dir=both, y explicit,
      error mark=none,
    ]
      table [x=p, y=best_val_accuracy_median, y error=std_acc, col sep=comma]
      {data/mlp_all/ew_sparse_jac/all_layers.csv};

    \addplot[color=col_partial, mark=*, mark size=1.2pt]
      table [x=p, y=best_val_accuracy_median, col sep=comma]
      {data/mlp_all/partial_participation/all_layers.csv};
      \label{unif_plot:part_part}

    \addplot+[
      color=col_partial,
      draw=none,
      mark=none,
      forget plot,
      error bars/.cd, y dir=both, y explicit,
      error mark=none,
    ]
      table [x=p, y=best_val_accuracy_median, y error=std_acc, col sep=comma]
      {data/mlp_all/partial_participation/all_layers.csv};

    \addplot[color=col_uniform, mark=*, mark size=1.2pt]
      table [x=p, y=best_val_accuracy_median, col sep=comma]
      {data/mlp_all/uniform/all_layers.csv};
      \label{unif_plot:per_col}

    \addplot+[
      color=col_uniform,
      draw=none,
      mark=none,
      forget plot,
      error bars/.cd, y dir=both, y explicit,
      error mark=none,
    ]
      table [x=p, y=best_val_accuracy_median, y error=std_acc, col sep=comma]
      {data/mlp_all/uniform/all_layers.csv};

    \addplot[color=col_prop, mark=*, mark size=1.2pt]
      table [x=p, y=best_val_accuracy_median, col sep=comma]
      {data/mlp_all/corr_optimal_sec_mom/all_layers.csv};
      \label{unif_plot:opt}

    \addplot+[
      color=col_prop,
      draw=none,
      mark=none,
      forget plot,
      error bars/.cd, y dir=both, y explicit,
      error mark=none,
    ]
      table [x=p, y=best_val_accuracy_median, y error=std_acc, col sep=comma]
      {data/mlp_all/corr_optimal_sec_mom/all_layers.csv};

    \addplot[color=col_lemma, mark=*, mark size=1.2pt]
      table [x=p, y=best_val_accuracy_median, col sep=comma]
      {data/mlp_all/corr_diag_optimal_sec_mom/all_layers.csv};
      \label{unif_plot:diag}

    \addplot+[
      color=col_lemma,
      draw=none,
      mark=none,
      forget plot,
      error bars/.cd, y dir=both, y explicit,
      error mark=none,
    ]
      table [x=p, y=best_val_accuracy_median, y error=std_acc, col sep=comma]
      {data/mlp_all/corr_diag_optimal_sec_mom/all_layers.csv};

    \addplot[color=black, mark=star, only marks, mark size=4.5pt,
      mark options={line width=2pt}]
      coordinates {(1, 98.11)};
    
    \coordinate (legend) at (axis description cs:0.99,0.02);
    \end{axis}
    
\matrix [
          draw,
          rounded corners=1pt,
          fill=white,
          fill opacity=0.55,
          text opacity=1,
          matrix of nodes,
          anchor=south east,
          nodes={font=\tiny},
          column sep=-3pt,
          row sep=-3pt,
          cells ={anchor=west}
        ] at (legend) {
            Uniform Masks & [-4mm] & Data Dep. Sketches & [-4mm] \\
            Element & \ref{unif_plot:ew} & RCS (Prop. \ref{prop:optimal-column-sketch}) & \ref{unif_plot:diag} \\
            Sample & \ref{unif_plot:part_part}   & DS (Lem. \ref{cor:diag-mask}) & \ref{unif_plot:opt}\\
            Column & \ref{unif_plot:per_col} & & &\\
            Baseline & \tikz{\draw plot[only marks, mark=star, mark size=4.5pt, mark options={line width=1.5pt}] coordinates {(0,0)};} &  \\
        };

\end{tikzpicture}

%% file: figures_tex/scores_comparison.tex
\begin{tikzpicture}
  \begin{axis}[
    title={},
    title style={font=\small},
    width=0.9\columnwidth,
    height=0.52\columnwidth,
    xlabel={Sampling budget ($p$ or $r/n$)},
    ylabel={Best Accuracy (\%)},
    xlabel style={font=\footnotesize, inner sep=0pt},
    ylabel style={font=\footnotesize, inner sep=0pt},
    xlabel shift=4pt,
    ylabel shift=-4pt,
    grid=major,
    xmode=log,
    ymin=70, ymax=101,
    legend pos = south east,
    legend columns=1,
    legend style={font=\tiny},
    scale only axis,
    tick label style={font=\footnotesize},
    major tick length=2pt,
  ]
    \addplot[color=col_lone, mark=*, mark size=1.2pt, densely dashed]
      table [x=p, y=best_val_accuracy_median, col sep=comma]
      {data/mlp_all/corr_topr_l1/all_layers.csv};
      \label{score_plot:lone_sqrt}

    \addplot+[
      color=col_lone,
      draw=none,
      mark=none,
      forget plot,
      error bars/.cd, y dir=both, y explicit,
      error mark=none,
    ]
      table [x=p, y=best_val_accuracy_median, y error=std_acc, col sep=comma]
      {data/mlp_all/corr_topr_l1/all_layers.csv};

    \addplot[color=col_lone, mark=*, mark size=1.2pt]
      table [x=p, y=best_val_accuracy_median, col sep=comma]
      {data/mlp_all/corr_topr_l1_squared/all_layers.csv};
      \label{score_plot:lone}
      

    \addplot+[
      color=col_lone,
      draw=none,
      mark=none,
      forget plot,
      error bars/.cd, y dir=both, y explicit,
      error mark=none,
    ]
      table [x=p, y=best_val_accuracy_median, y error=std_acc, col sep=comma]
      {data/mlp_all/corr_topr_l1_squared/all_layers.csv};

    \addplot[color=col_ltwo, mark=*, mark size=1.2pt, densely dashed]
      table [x=p, y=best_val_accuracy_median, col sep=comma]
      {data/mlp_all/corr_topr_l2/all_layers.csv};
      \label{score_plot:ltwo_sqrt}

    \addplot+[
      color=col_ltwo,
      draw=none,
      mark=none,
      forget plot,
      error bars/.cd, y dir=both, y explicit,
      error mark=none,
    ]
      table [x=p, y=best_val_accuracy_median, y error=std_acc, col sep=comma]
      {data/mlp_all/corr_topr_l2/all_layers.csv};

    \addplot[color=col_ltwo, mark=*, mark size=1.2pt]
      table [x=p, y=best_val_accuracy_median, col sep=comma]
      {data/mlp_all/corr_topr_l2_squared/all_layers.csv};
      \label{score_plot:ltwo}

    \addplot+[
      color=col_ltwo,
      draw=none,
      mark=none,
      forget plot,
      error bars/.cd, y dir=both, y explicit,
      error mark=none,
    ]
      table [x=p, y=best_val_accuracy_median, y error=std_acc, col sep=comma]
      {data/mlp_all/corr_topr_l2_squared/all_layers.csv};

    \addplot[color=col_var, mark=*, mark size=1.2pt, densely dashed]
      table [x=p, y=best_val_accuracy_median, col sep=comma]
      {data/mlp_all/corr_topr_var/all_layers.csv};
      \label{score_plot:var_sqrt}

    \addplot+[
      color=col_var,
      draw=none,
      mark=none,
      forget plot,
      error bars/.cd, y dir=both, y explicit,
      error mark=none,
    ]
      table [x=p, y=best_val_accuracy_median, y error=std_acc, col sep=comma]
      {data/mlp_all/corr_topr_var/all_layers.csv};

    \addplot[color=col_var, mark=*, mark size=1.2pt,]
      table [x=p, y=best_val_accuracy_median, col sep=comma]
      {data/mlp_all/corr_topr_var_squared/all_layers.csv};
      \label{score_plot:var}

    \addplot+[
      color=col_var,
      draw=none,
      mark=none,
      forget plot,
      error bars/.cd, y dir=both, y explicit,
      error mark=none,
    ]
      table [x=p, y=best_val_accuracy_median, y error=std_acc, col sep=comma]
      {data/mlp_all/corr_topr_var_squared/all_layers.csv};

    \addplot[color=black, mark=star, only marks, mark size=4.5pt,
      mark options={line width=2pt}]
      coordinates {(1, 98.11)};
    
    \coordinate (legend) at (axis description cs:0.99,0.02);
\end{axis}

\matrix [
          draw,
          rounded corners=1pt,
          fill=white,
          fill opacity=0.55,
          text opacity=1,
          matrix of nodes,
          anchor=south east,
          nodes={font=\tiny},
          column sep=2pt,
          row sep=-4pt,
          cells ={anchor=west}
        ] at (legend) {
            Proxy                    & $\propto$ Proxy    & $\propto \sqrt{\text{Proxy}}$   \\
            $\ell_1$                 & \ref{score_plot:lone} & \ref{score_plot:lone_sqrt}\\
            $\ell_2$                 & \ref{score_plot:ltwo} & \ref{score_plot:ltwo_sqrt}\\
            $\mathrm{Var}$           & \ref{score_plot:var} & \ref{score_plot:var_sqrt}\\
            Baseline & \tikz{\draw plot[only marks, mark=star, mark size=4.5pt, mark options={line width=1.5pt}] coordinates {(0,0)};} &  \\
        };

\end{tikzpicture}

%% file: figures_tex/svd_vs_no_svd.tex
\begin{tikzpicture}
  \begin{axis}[
    title={},
    title style={font=\small},
    width=0.9\columnwidth,
    height=0.52\columnwidth,
    xlabel={Sampling budget ($p$ or $r/n$)},
    ylabel={Best Accuracy (\%)},
    xlabel style={font=\footnotesize, inner sep=0pt},
    ylabel style={font=\footnotesize, inner sep=0pt},
    xlabel shift=4pt,
    ylabel shift=-4pt,
    grid=major,
    xmode=log,
    ymin=77, ymax=101,
    legend pos = south east,
    legend columns=1,
    legend style={font=\tiny},
    scale only axis,
    tick label style={font=\footnotesize},
    major tick length=2pt,
  ]
    \addplot[color=col_gsv, mark=*, mark size=1.2pt, densely dashed]
      table [x=p, y=best_val_accuracy_median, col sep=comma]
      {data/mlp_all/corr_svd_topr/all_layers.csv};
      \label{svd_plot:gsv_sqrt}

    \addplot+[
      color=col_gsv,
      draw=none,
      mark=none,
      forget plot,
      error bars/.cd, y dir=both, y explicit,
      error mark=none,
    ]
      table [x=p, y=best_val_accuracy_median, y error=std_acc, col sep=comma]
      {data/mlp_all/corr_svd_topr/all_layers.csv};

    \addplot[color=col_prop, mark=*, mark size=1.2pt]
      table [x=p, y=best_val_accuracy_median, col sep=comma]
      {data/mlp_all/corr_optimal_sec_mom/all_layers.csv};
      \label{svd_plot:opt}

    \addplot+[
      color=col_prop,
      draw=none,
      mark=none,
      forget plot,
      error bars/.cd, y dir=both, y explicit,
      error mark=none,
    ]
      table [x=p, y=best_val_accuracy_median, y error=std_acc, col sep=comma]
      {data/mlp_all/corr_optimal_sec_mom/all_layers.csv};

    \addplot[color=col_lemma, mark=*, mark size=1.2pt]
      table [x=p, y=best_val_accuracy_median, col sep=comma]
      {data/mlp_all/corr_diag_optimal_sec_mom/all_layers.csv};
      \label{svd_plot:diag}

    \addplot+[
      color=col_lemma,
      draw=none,
      mark=none,
      forget plot,
      error bars/.cd, y dir=both, y explicit,
      error mark=none,
    ]
      table [x=p, y=best_val_accuracy_median, y error=std_acc, col sep=comma]
      {data/mlp_all/corr_diag_optimal_sec_mom/all_layers.csv};

    \addplot[color=col_lone, mark=*, mark size=1.2pt]
      table [x=p, y=best_val_accuracy_median, col sep=comma]
      {data/mlp_all/corr_topr_l1_squared/all_layers.csv};
      \label{svd_plot:l_one}

    \addplot+[
      color=col_lone,
      draw=none,
      mark=none,
      forget plot,
      error bars/.cd, y dir=both, y explicit,
      error mark=none,
    ]
      table [x=p, y=best_val_accuracy_median, y error=std_acc, col sep=comma]
      {data/mlp_all/corr_topr_l1_squared/all_layers.csv};

    \addplot[color=col_gsv, mark=*, mark size=1.2pt]
      table [x=p, y=best_val_accuracy_median, col sep=comma]
      {data/mlp_all/corr_svd_topr_squared/all_layers.csv};
      \label{svd_plot:gsv}

    \addplot+[
      color=col_gsv,
      draw=none,
      mark=none,
      forget plot,
      error bars/.cd, y dir=both, y explicit,
      error mark=none,
    ]
      table [x=p, y=best_val_accuracy_median, y error=std_acc, col sep=comma]
      {data/mlp_all/corr_svd_topr_squared/all_layers.csv};

    \addplot[color=black, mark=star, only marks, mark size=4.5pt,
      mark options={line width=2pt}]
      coordinates {(1, 98.11)};
    \coordinate (legend) at (axis description cs:0.99,0.02);
\end{axis}

\matrix [
          draw,
          rounded corners=1pt,
          fill=white,
          fill opacity=0.55,
          text opacity=1,
          matrix of nodes,
          anchor=south east,
          nodes={font=\tiny},
          column sep=-2pt,
          row sep=-3pt,
          cells ={anchor=west}
        ] at (legend) {
            Spectral Based & [-1.5mm] & Coordinate Based & [-4mm] \\
            $\sqrt{\text{G-SV}}$ & \ref{svd_plot:gsv_sqrt}   & DS (Lem. \ref{cor:diag-mask}) & \ref{svd_plot:diag} \\
            G-SV & \ref{svd_plot:gsv}   &  $\ell_1$ & \ref{svd_plot:l_one}\\
            RCS (Prop \ref{prop:optimal-column-sketch}) & \ref{svd_plot:opt} & & &\\
            Baseline & \tikz{\draw plot[only marks, mark=star, mark size=4.5pt, mark options={line width=1.5pt}] coordinates {(0,0)};} &  \\
        };

\end{tikzpicture}

%% file: figures_tex/all_methods_bagnet.tex
\begin{tikzpicture}
  \begin{axis}[
    title={},
    title style={font=\small},
    width=0.9\columnwidth,
    height=0.52\columnwidth,
    xlabel={Sampling budget ($p$ or $r/n$)},
    ylabel={Best Accuracy (\%)},
    xlabel style={font=\footnotesize, inner sep=0pt},
    ylabel style={font=\footnotesize, inner sep=0pt},
    xlabel shift=4pt,
    ylabel shift=-4pt,
    grid=major,
    xmode=log,
    ymin=12, ymax=93,
    legend pos = south east,
    legend columns=1,
    legend style={font=\tiny},
    scale only axis,
    tick label style={font=\footnotesize},
    major tick length=2pt,
  ]
   
    \addplot[color=col_lone, mark=*, mark size=1.2pt,]
      table [x=p, y=best_val_accuracy_median, col sep=comma]
      {data/bagnet/corr_topr_l1_squared/all_layers.csv};
      \label{bagnet_plot:l_one}

    \addplot+[
      color=col_lone,
      draw=none,
      mark=none,
      forget plot,
      error bars/.cd, y dir=both, y explicit,
    ]
      table [x=p, y=best_val_accuracy_median, y error=std_acc, col sep=comma]
      {data/bagnet/corr_topr_l1_squared/all_layers.csv};

    \addplot[color=col_lemma,mark=*, mark size=1.2pt,]
      table [x=p, y=best_val_accuracy_median, col sep=comma]
      {data/bagnet/corr_diag_optimal_sec_mom/all_layers.csv};
      \label{bagnet_plot:diag}

    \addplot+[
      color=col_lemma,
      draw=none,
      mark=none,
      forget plot,
      error bars/.cd, y dir=both, y explicit,
    ]
      table [x=p, y=best_val_accuracy_median, y error=std_acc, col sep=comma]
      {data/bagnet/corr_diag_optimal_sec_mom/all_layers.csv};

    \addplot[color=col_gsv,mark=*, mark size=1.2pt,]
      table [x=p, y=best_val_accuracy_median, col sep=comma]
      {data/bagnet/corr_svd_topr_squared/all_layers.csv};
      \label{bagnet_plot:gsv}

    \addplot+[
      color=col_gsv,
      draw=none,
      mark=none,
      forget plot,
      error bars/.cd, y dir=both, y explicit,
    ]
      table [x=p, y=best_val_accuracy_median, y error=std_acc, col sep=comma]
      {data/bagnet/corr_svd_topr_squared/all_layers.csv};

    \addplot[color=col_prop,mark=*, mark size=1.2pt,]
      table [x=p, y=best_val_accuracy_median, col sep=comma]
      {data/bagnet/corr_optimal_sec_mom/all_layers.csv};
      \label{bagnet_plot:opt}

    \addplot+[
      color=col_prop,
      draw=none,
      mark=none,
      forget plot,
      error bars/.cd, y dir=both, y explicit,
    ]
      table [x=p, y=best_val_accuracy_median, y error=std_acc, col sep=comma]
      {data/bagnet/corr_optimal_sec_mom/all_layers.csv};

    \addplot[color=col_ew_sparse, mark=*, mark size=1.2pt,]
      table [x=p, y=best_val_accuracy_median, col sep=comma]
      {data/bagnet/ew_sparse_jac/all_layers.csv};
      \label{bagnet_plot:ew}

    \addplot+[
      color=col_ew_sparse,
      draw=none,
      mark=none,
      forget plot,
      error bars/.cd, y dir=both, y explicit,
    ]
      table [x=p, y=best_val_accuracy_median, y error=std_acc, col sep=comma]
      {data/bagnet/ew_sparse_jac/all_layers.csv};

    \addplot[color=col_partial, mark=*, mark size=1.2pt,]
      table [x=p, y=best_val_accuracy_median, col sep=comma]
      {data/bagnet/partial_participation/all_layers.csv};
      \label{bagnet_plot:part_part}

    \addplot+[
      color=col_partial,
      draw=none,
      mark=none,
      forget plot,
      error bars/.cd, y dir=both, y explicit,
    ]
      table [x=p, y=best_val_accuracy_median, y error=std_acc, col sep=comma]
      {data/bagnet/partial_participation/all_layers.csv};

    \addplot[color=black, mark=star, only marks, mark size=4.5pt,
      mark options={line width=2pt}]
      coordinates {(1, 90.17)};
    \coordinate (legend) at (axis description cs:0.99,0.01);
    \end{axis}

\matrix [
          draw,
          rounded corners=1pt,
          fill=white,
          fill opacity=0.45,
          text opacity=1,
          matrix of nodes,
          anchor=south east,
          nodes={font=\tiny},
          column sep=-3pt,
          row sep=-3pt,
          cells ={anchor=west}
        ] at (legend) {
            Unif. Masks & Element & \ref{bagnet_plot:ew} \\
                        & Sample  & \ref{bagnet_plot:part_part} \\
            Coordinates & $\ell_1$    & \ref{bagnet_plot:l_one} \\
                        & DS (Lem. \ref{cor:diag-mask}) & \ref{bagnet_plot:diag} \\
            Spectral    & G-SV        & \ref{bagnet_plot:gsv} \\
                        & RCS (Prop. \ref{prop:optimal-column-sketch}) & \ref{bagnet_plot:opt} \\
            Baseline & \tikz{\draw plot[only marks, mark=star, mark size=4.5pt, mark options={line width=1.5pt}] coordinates {(0,0)};} &  \\
        };

\end{tikzpicture}

%% file: figures_tex/all_methods_vit.tex
\begin{tikzpicture}
  \begin{axis}[
    title={},
    title style={font=\small},
    width=0.9\columnwidth,
    height=0.52\columnwidth,
    xlabel={Sampling budget ($p$ or $r/n$)},
    ylabel={Best Accuracy (\%)},
    xlabel style={font=\footnotesize, inner sep=0pt},
    ylabel style={font=\footnotesize, inner sep=0pt},
    xlabel shift=4pt,
    ylabel shift=-4pt,
    grid=major,
    xmode=log,
    ymin=40, ymax=93,
    legend pos = south east,
    legend columns=1,
    legend style={font=\tiny},
    scale only axis,
    tick label style={font=\footnotesize},
    major tick length=2pt,
  ]
   
    \addplot[color=col_lone, mark=*, mark size=1.2pt,]
      table [x=p, y=best_val_accuracy_median, col sep=comma]
      {data/vit/corr_topr_l1_squared/all_layers.csv};
      \label{vit_plot:l_one}

    \addplot+[
      color=col_lone,
      draw=none,
      mark=none,
      forget plot,
      error bars/.cd, y dir=both, y explicit,
    ]
      table [x=p, y=best_val_accuracy_median, y error=std_acc, col sep=comma]
      {data/vit/corr_topr_l1_squared/all_layers.csv};

    \addplot[color=col_lemma,mark=*, mark size=1.2pt,]
      table [x=p, y=best_val_accuracy_median, col sep=comma]
      {data/vit/corr_diag_optimal_sec_mom/all_layers.csv};
      \label{vit_plot:diag}

    \addplot+[
      color=col_lemma,
      draw=none,
      mark=none,
      forget plot,
      error bars/.cd, y dir=both, y explicit,
    ]
      table [x=p, y=best_val_accuracy_median, y error=std_acc, col sep=comma]
      {data/vit/corr_diag_optimal_sec_mom/all_layers.csv};

    \addplot[color=col_gsv,mark=*, mark size=1.2pt,]
      table [x=p, y=best_val_accuracy_median, col sep=comma]
      {data/vit/corr_svd_topr_squared/all_layers.csv};
      \label{vit_plot:gsv}

    \addplot+[
      color=col_gsv,
      draw=none,
      mark=none,
      forget plot,
      error bars/.cd, y dir=both, y explicit,
    ]
      table [x=p, y=best_val_accuracy_median, y error=std_acc, col sep=comma]
      {data/vit/corr_svd_topr_squared/all_layers.csv};

    \addplot[color=col_prop,mark=*, mark size=1.2pt,]
      table [x=p, y=best_val_accuracy_median, col sep=comma]
      {data/vit/corr_optimal_sec_mom/all_layers.csv};
      \label{vit_plot:opt}

    \addplot+[
      color=col_prop,
      draw=none,
      mark=none,
      forget plot,
      error bars/.cd, y dir=both, y explicit,
    ]
      table [x=p, y=best_val_accuracy_median, y error=std_acc, col sep=comma]
      {data/vit/corr_optimal_sec_mom/all_layers.csv};

    \addplot[color=col_ew_sparse, mark=*, mark size=1.2pt,]
      table [x=p, y=best_val_accuracy_median, col sep=comma]
      {data/vit/ew_sparse_jac/all_layers.csv};
      \label{vit_plot:ew}

    \addplot+[
      color=col_ew_sparse,
      draw=none,
      mark=none,
      forget plot,
      error bars/.cd, y dir=both, y explicit,
    ]
      table [x=p, y=best_val_accuracy_median, y error=std_acc, col sep=comma]
      {data/vit/ew_sparse_jac/all_layers.csv};

    \addplot[color=col_partial, mark=*, mark size=1.2pt,]
      table [x=p, y=best_val_accuracy_median, col sep=comma]
      {data/vit/partial_participation/all_layers.csv};
      \label{vit_plot:part_part}

    \addplot+[
      color=col_partial,
      draw=none,
      mark=none,
      forget plot,
      error bars/.cd, y dir=both, y explicit,
    ]
      table [x=p, y=best_val_accuracy_median, y error=std_acc, col sep=comma]
      {data/vit/partial_participation/all_layers.csv};

    \addplot[color=black, mark=star, only marks, mark size=4.5pt,
      mark options={line width=2pt}]
      coordinates {(1, 90.17)};
    \coordinate (legend) at (axis description cs:0.99,0.01);
    \end{axis}

\matrix [
          draw,
          rounded corners=1pt,
          fill=white,
          fill opacity=0.45,
          text opacity=1,
          matrix of nodes,
          anchor=south east,
          nodes={font=\tiny},
          column sep=-3pt,
          row sep=-3pt,
          cells ={anchor=west}
        ] at (legend) {
            Unif. Masks & Element & \ref{bagnet_plot:ew} \\
                        & Sample  & \ref{bagnet_plot:part_part} \\
            Coordinates & $\ell_1$    & \ref{bagnet_plot:l_one} \\
                        & DS (Lem. \ref{cor:diag-mask}) & \ref{bagnet_plot:diag} \\
            Spectral    & G-SV        & \ref{bagnet_plot:gsv} \\
                        & RCS (Prop. \ref{prop:optimal-column-sketch}) & \ref{bagnet_plot:opt} \\
            Baseline & \tikz{\draw plot[only marks, mark=star, mark size=4.5pt, mark options={line width=1.5pt}] coordinates {(0,0)};} &  \\
        };

\end{tikzpicture}

%% file: figures_tex/variance_prop_masks.tex
\begin{tikzpicture}
  \begin{axis}[
    title={},
    title style={font=\small},
    width=0.9\columnwidth,
    height=0.62\columnwidth,
    xlabel={Sampling budget ($p$ or $r/n$)},
    ylabel={Best Accuracy (\%)},
    xlabel style={font=\footnotesize, inner sep=0pt},
    ylabel style={font=\footnotesize, inner sep=0pt},
    xlabel shift=4pt,
    ylabel shift=-4pt,
    grid=major,
    xmode=log,
    ymin=30, ymax=101,
    legend pos = south east,
    legend columns=1,
    legend style={font=\tiny},
    scale only axis,
    tick label style={font=\footnotesize},
    major tick length=2pt,
  ]
    \addplot[color=col_ew_sparse, mark=*]
      table [x=p, y=best_val_accuracy_median, col sep=comma]
      {data/mlp_all/ew_sparse_jac/all_layers.csv};
      \label{var_plot:ew_all}

    \addplot+[
      color=col_ew_sparse,
      draw=none,
      mark=none,
      forget plot,
      error bars/.cd, y dir=both, y explicit,
      error mark=none,
    ]
      table [x=p, y=best_val_accuracy_median, y error=std_acc, col sep=comma]
      {data/mlp_all/ew_sparse_jac/all_layers.csv};

    \addplot[color=col_ew_sparse, mark=*, densely dashed]
      table [x=p, y=best_val_accuracy_median, col sep=comma]
      {data/mlp_all/ew_sparse_jac/layer_4.csv};
      \label{var_plot:ew_4}

    \addplot+[
      color=col_ew_sparse,
      draw=none,
      mark=none,
      forget plot,
      error bars/.cd, y dir=both, y explicit,
      error mark=none,
    ]
      table [x=p, y=best_val_accuracy_median, y error=std_acc, col sep=comma]
      {data/mlp_all/ew_sparse_jac/layer_4.csv};

    \addplot[color=col_ew_sparse, mark=*, densely dotted]
      table [x=p, y=best_val_accuracy_median, col sep=comma]
      {data/mlp_all/ew_sparse_jac/layer_1.csv};
      \label{var_plot:ew_1}
      

    \addplot+[
      color=col_ew_sparse,
      draw=none,
      mark=none,
      forget plot,
      error bars/.cd, y dir=both, y explicit,
      error mark=none,
    ]
      table [x=p, y=best_val_accuracy_median, y error=std_acc, col sep=comma]
      {data/mlp_all/ew_sparse_jac/layer_1.csv};

    \addplot[color=col_partial, mark=*]
      table [x=p, y=best_val_accuracy_median, col sep=comma]
      {data/mlp_all/partial_participation/all_layers.csv};
      \label{var_plot:pp_all}
      

    \addplot+[
      color=col_partial,
      draw=none,
      mark=none,
      forget plot,
      error bars/.cd, y dir=both, y explicit,
      error mark=none,
    ]
      table [x=p, y=best_val_accuracy_median, y error=std_acc, col sep=comma]
      {data/mlp_all/partial_participation/all_layers.csv};

    \addplot[color=col_partial, mark=*, densely dashed]
      table [x=p, y=best_val_accuracy_median, col sep=comma]
      {data/mlp_all/partial_participation/layer_4.csv};
      \label{var_plot:pp_4}

    \addplot+[
      color=col_partial,
      draw=none,
      mark=none,
      forget plot,
      error bars/.cd, y dir=both, y explicit,
      error mark=none,
    ]
      table [x=p, y=best_val_accuracy_median, y error=std_acc, col sep=comma]
      {data/mlp_all/partial_participation/layer_4.csv};

    \addplot[color=col_partial, mark=*, densely dotted]
      table [x=p, y=best_val_accuracy_median, col sep=comma]
      {data/mlp_all/partial_participation/layer_1.csv};
      \label{var_plot:pp_1}

    \addplot+[
      color=col_partial,
      draw=none,
      mark=none,
      forget plot,
      error bars/.cd, y dir=both, y explicit,
      error mark=none,
    ]
      table [x=p, y=best_val_accuracy_median, y error=std_acc, col sep=comma]
      {data/mlp_all/partial_participation/layer_1.csv};

    \addplot[color=black, mark=star, only marks, mark size=6pt,
      mark options={line width=2pt}]
      coordinates {(1, 98.11)};
    
    \coordinate (legend) at (axis description cs:0.99,0.02);
    \end{axis}
    
\matrix [
          draw,
          rounded corners=1pt,
          fill=white,
          fill opacity=0.55,
          text opacity=1,
          matrix of nodes,
          anchor=south east,
          nodes={font=\tiny},
          column sep=-3pt,
          row sep=-3pt,
          cells ={anchor=west}
        ] at (legend) {
                   &         & VJP location &      \\
            Method & Layer 1 & Layer 4 & All-Layer \\
            Per-El. & \ref{var_plot:ew_1} & \ref{var_plot:ew_4} & \ref{var_plot:ew_all} \\
            Per-Sample. & \ref{var_plot:pp_1} & \ref{var_plot:pp_4} & \ref{var_plot:pp_all} \\
            Baseline & \tikz{\draw plot[only marks, mark=star, mark size=4.5pt, mark options={line width=1.5pt}] coordinates {(0,0)};} &  \\
        };

\end{tikzpicture}

%% file: figures_tex/variance_prop_coord.tex
\begin{tikzpicture}
  \begin{axis}[
    title={},
    title style={font=\small},
    width=0.9\columnwidth,
    height=0.62\columnwidth,
    xlabel={Sampling budget ($p$ or $r/n$)},
    ylabel={Best Accuracy (\%)},
    xlabel style={font=\footnotesize, inner sep=0pt},
    ylabel style={font=\footnotesize, inner sep=0pt},
    xlabel shift=4pt,
    ylabel shift=-4pt,
    grid=major,
    xmode=log,
    ymin=77, ymax=101,
    legend pos = south east,
    legend columns=1,
    legend style={font=\tiny},
    scale only axis,
    tick label style={font=\footnotesize},
    major tick length=2pt,
  ]
    \addplot[color=col_lemma, mark=*]
      table [x=p, y=best_val_accuracy_median, col sep=comma]
      {data/mlp_all/corr_diag_optimal_sec_mom/all_layers.csv};
      \label{var_plot:diag_all}

    \addplot+[
      color=col_lemma,
      draw=none,
      mark=none,
      forget plot,
      error bars/.cd, y dir=both, y explicit,
      error mark=none,
    ]
      table [x=p, y=best_val_accuracy_median, y error=std_acc, col sep=comma]
      {data/mlp_all/corr_diag_optimal_sec_mom/all_layers.csv};

    \addplot[color=col_lemma, mark=*, densely dashed]
      table [x=p, y=best_val_accuracy_median, col sep=comma]
      {data/mlp_all/corr_diag_optimal_sec_mom/layer_4.csv};
      \label{var_plot:diag_4}

    \addplot+[
      color=col_lemma,
      draw=none,
      mark=none,
      forget plot,
      error bars/.cd, y dir=both, y explicit,
      error mark=none,
    ]
      table [x=p, y=best_val_accuracy_median, y error=std_acc, col sep=comma]
      {data/mlp_all/corr_diag_optimal_sec_mom/layer_4.csv};

    \addplot[color=col_lemma, mark=*, densely dotted]
      table [x=p, y=best_val_accuracy_median, col sep=comma]
      {data/mlp_all/corr_diag_optimal_sec_mom/layer_1.csv};
      \label{var_plot:diag_1}

    \addplot+[
      color=col_lemma,
      draw=none,
      mark=none,
      forget plot,
      error bars/.cd, y dir=both, y explicit,
      error mark=none,
    ]
      table [x=p, y=best_val_accuracy_median, y error=std_acc, col sep=comma]
      {data/mlp_all/corr_diag_optimal_sec_mom/layer_1.csv};

    \addplot[color=col_lone, mark=*]
      table [x=p, y=best_val_accuracy_median, col sep=comma]
      {data/mlp_all/corr_topr_l1_squared/all_layers.csv};
      \label{var_plot:l_one_all}

    \addplot+[
      color=col_lone,
      draw=none,
      mark=none,
      forget plot,
      error bars/.cd, y dir=both, y explicit,
      error mark=none,
    ]
      table [x=p, y=best_val_accuracy_median, y error=std_acc, col sep=comma]
      {data/mlp_all/corr_topr_l1_squared/all_layers.csv};

    \addplot[color=col_lone, mark=*, densely dashed]
      table [x=p, y=best_val_accuracy_median, col sep=comma]
      {data/mlp_all/corr_topr_l1_squared/layer_4.csv};
      \label{var_plot:l_one_4}

    \addplot+[
      color=col_lone,
      draw=none,
      mark=none,
      forget plot,
      error bars/.cd, y dir=both, y explicit,
      error mark=none,
    ]
      table [x=p, y=best_val_accuracy_median, y error=std_acc, col sep=comma]
      {data/mlp_all/corr_topr_l1_squared/layer_4.csv};

    \addplot[color=col_lone, mark=*, densely dotted]
      table [x=p, y=best_val_accuracy_median, col sep=comma]
      {data/mlp_all/corr_topr_l1_squared/layer_1.csv};
      \label{var_plot:l_one_1}

    \addplot+[
      color=col_lone,
      draw=none,
      mark=none,
      forget plot,
      error bars/.cd, y dir=both, y explicit,
      error mark=none,
    ]
      table [x=p, y=best_val_accuracy_median, y error=std_acc, col sep=comma]
      {data/mlp_all/corr_topr_l1_squared/layer_1.csv};

    \addplot[color=black, mark=star, only marks, mark size=6pt,
      mark options={line width=2pt}]
      coordinates {(1, 98.11)};
    
    \coordinate (legend) at (axis description cs:0.99,0.02);
    \end{axis}
    
\matrix [
          draw,
          rounded corners=1pt,
          fill=white,
          fill opacity=0.55,
          text opacity=1,
          matrix of nodes,
          anchor=south east,
          nodes={font=\tiny},
          column sep=-3pt,
          row sep=-3pt,
          cells ={anchor=west}
        ] at (legend) {
                   &         & VJP location &      \\
            Method & Layer 1 & Layer 4 & All-Layer \\
            DS (Lem. \ref{cor:diag-mask}) & \ref{var_plot:diag_1} & \ref{var_plot:diag_4} & \ref{var_plot:diag_all} \\
            $\ell_1$ & \ref{var_plot:l_one_1} & \ref{var_plot:l_one_4} & \ref{var_plot:l_one_all} \\
            Baseline & \tikz{\draw plot[only marks, mark=star, mark size=4.5pt, mark options={line width=1.5pt}] coordinates {(0,0)};} &  \\
        };

\end{tikzpicture}

%% file: figures_tex/variance_prop_spectral.tex
\begin{tikzpicture}
  \begin{axis}[
    title={},
    title style={font=\small},
    width=0.9\columnwidth,
    height=0.62\columnwidth,
    xlabel={Sampling budget ($p$ or $r/n$)},
    ylabel={Best Accuracy (\%)},
    xlabel style={font=\footnotesize, inner sep=0pt},
    ylabel style={font=\footnotesize, inner sep=0pt},
    xlabel shift=4pt,
    ylabel shift=-4pt,
    grid=major,
    xmode=log,
    ymin=90, ymax=100,
    legend pos = south east,
    legend columns=1,
    legend style={font=\tiny},
    scale only axis,
    tick label style={font=\footnotesize},
    major tick length=2pt,
  ]
    \addplot[color=col_prop, mark=*]
      table [x=p, y=best_val_accuracy_median, col sep=comma]
      {data/mlp_all/corr_optimal_sec_mom/all_layers.csv};
      \label{var_plot:opti_all}

    \addplot+[
      color=col_prop,
      draw=none,
      mark=none,
      forget plot,
      error bars/.cd, y dir=both, y explicit,
      error mark=none,
    ]
      table [x=p, y=best_val_accuracy_median, y error=std_acc, col sep=comma]
      {data/mlp_all/corr_optimal_sec_mom/all_layers.csv};

    \addplot[color=col_prop, mark=*, densely dashed]
      table [x=p, y=best_val_accuracy_median, col sep=comma]
      {data/mlp_all/corr_optimal_sec_mom/layer_4.csv};
      \label{var_plot:opti_4}

    \addplot+[
      color=col_prop,
      draw=none,
      mark=none,
      forget plot,
      error bars/.cd, y dir=both, y explicit,
      error mark=none,
    ]
      table [x=p, y=best_val_accuracy_median, y error=std_acc, col sep=comma]
      {data/mlp_all/corr_optimal_sec_mom/layer_4.csv};

    \addplot[color=col_prop, mark=*, densely dotted]
      table [x=p, y=best_val_accuracy_median, col sep=comma]
      {data/mlp_all/corr_optimal_sec_mom/layer_1.csv};
      \label{var_plot:opti_1}
      

    \addplot+[
      color=col_prop,
      draw=none,
      mark=none,
      forget plot,
      error bars/.cd, y dir=both, y explicit,
      error mark=none,
    ]
      table [x=p, y=best_val_accuracy_median, y error=std_acc, col sep=comma]
      {data/mlp_all/corr_optimal_sec_mom/layer_1.csv};

    \addplot[color=col_gsv, mark=*]
      table [x=p, y=best_val_accuracy_median, col sep=comma]
      {data/mlp_all/corr_svd_topr_squared/all_layers.csv};
      \label{var_plot:gsv_all}
      

    \addplot+[
      color=col_gsv,
      draw=none,
      mark=none,
      forget plot,
      error bars/.cd, y dir=both, y explicit,
      error mark=none,
    ]
      table [x=p, y=best_val_accuracy_median, y error=std_acc, col sep=comma]
      {data/mlp_all/corr_svd_topr_squared/all_layers.csv};

    \addplot[color=col_gsv, mark=*, densely dashed]
      table [x=p, y=best_val_accuracy_median, col sep=comma]
      {data/mlp_all/corr_svd_topr_squared/layer_4.csv};
      \label{var_plot:gsv_4}

    \addplot+[
      color=col_gsv,
      draw=none,
      mark=none,
      forget plot,
      error bars/.cd, y dir=both, y explicit,
      error mark=none,
    ]
      table [x=p, y=best_val_accuracy_median, y error=std_acc, col sep=comma]
      {data/mlp_all/corr_svd_topr_squared/layer_4.csv};

    \addplot[color=col_gsv, mark=*, densely dotted]
      table [x=p, y=best_val_accuracy_median, col sep=comma]
      {data/mlp_all/corr_svd_topr_squared/layer_1.csv};
      \label{var_plot:gsv_1}

    \addplot+[
      color=col_gsv,
      draw=none,
      mark=none,
      forget plot,
      error bars/.cd, y dir=both, y explicit,
      error mark=none,
    ]
      table [x=p, y=best_val_accuracy_median, y error=std_acc, col sep=comma]
      {data/mlp_all/corr_svd_topr_squared/layer_1.csv};

    \addplot[color=black, mark=star, only marks, mark size=6pt,
      mark options={line width=2pt}]
      coordinates {(1, 98.11)};
    
    \coordinate (legend) at (axis description cs:0.99,0.02);
    \end{axis}
    
\matrix [
          draw,
          rounded corners=1pt,
          fill=white,
          fill opacity=0.55,
          text opacity=1,
          matrix of nodes,
          anchor=south east,
          nodes={font=\tiny},
          column sep=-3pt,
          row sep=-3pt,
          cells ={anchor=west}
        ] at (legend) {
                   &         & VJP location &      \\
            Method & Layer 1 & Layer 4 & All-Layer \\
            RCS (Prop. \ref{prop:optimal-column-sketch}) & \ref{var_plot:opti_1} & \ref{var_plot:opti_4} & \ref{var_plot:opti_all} \\
            G-SV & \ref{var_plot:gsv_1} & \ref{var_plot:gsv_4} & \ref{var_plot:gsv_all} \\
            Baseline & \tikz{\draw plot[only marks, mark=star, mark size=4.5pt, mark options={line width=1.5pt}] coordinates {(0,0)};} &  \\
        };

\end{tikzpicture}